\newcommand{\ignore}[1]{}
\newcommand{\notinproc}[1]{#1}
\newcommand{\onlyinproc}[1]{}
\def\E{\textsf{E}}
\def\Exp{\textsf{Exp}}
\def\Var{\textsf{Var}}
\def\MSE{\textsf{MSE}}
\def\Cov{\textsf{Cov}}
\def\Bias{\textsf{Bias}}
\def\Erlang{\textsf{Erlang}}
\def\tail{\textsf{tail}}
\def\order{\textsf{order}}
\def\ekey{\textsf{key}}
\def\eval{\textsf{val}}
\def\bnu{\boldsymbol{\nu}}
\def\Zipf{\textsf{Zipf}}
\def\poly{\mathop{\mathrm{poly}}}
\def\polylog{\mathop{\mathrm{polylog}}}
\newtheorem{thm}{Theorem}[section]
\newtheorem{theorem}{Theorem}[section]
\newtheorem{lemma}[thm]{Lemma}
\newtheorem{definition}[thm]{Definition}
\newtheorem{corollary}[thm]{ Corollary}
\title{WOR and $p$'s: \\Sketches for $\ell_p$-Sampling Without Replacement}
\author{
  Edith Cohen\\ 
    Google Research\\
    Tel Aviv University\\
    \texttt{edith@cohenwang.com} \\
  \And
  Rasmus Pagh\\
  IT University of Copenhagen\\
  BARC\\
  Google Research\\
  \texttt{pagh@itu.dk}
  \And
  David P. Woodruff\\
  CMU\\
  \texttt{dwoodruf@cs.cmu.edu}
 }
\date{}
\begin{document}
\maketitle

\begin{abstract}
Weighted sampling is a fundamental tool in data analysis and machine learning pipelines. Samples are used for efficient estimation of statistics or as sparse representations of the data.  When weight distributions are skewed, as is often the case in practice, without-replacement (WOR) sampling is much more effective than with-replacement (WR) sampling: it provides a broader representation and higher accuracy for the same number of samples.
We design novel composable sketches for WOR {\em $\ell_p$ sampling}, weighted sampling of keys according to a power $p\in[0,2]$ of their frequency (or for signed data, sum of updates). Our sketches have size that grows only linearly with the sample size. 
Our design is simple and practical, despite intricate analysis, and based on off-the-shelf use of widely implemented heavy hitters sketches such as \texttt{CountSketch}. Our method is the first to provide WOR sampling in the important regime of $p>1$ and the first to handle signed updates for $p>0$.
\end{abstract}


\section{Introduction}

Weighted random sampling is a fundamental tool that is pervasive in
machine learning and data analysis pipelines.   
A sample serves as a sparse summary of the data and provides efficient estimation of statistics and aggregates.

 We consider data $\mathcal{E}$ presented as {\em elements} in the form of key value pairs $e=(e.\ekey,e.\eval)$.  We operate with respect to the {\em aggregated} form of keys and their {\em frequencies} $\nu_x := \sum_{e\mid e.\ekey=x} e.\eval$, defined as the sum of values of elements with key $x$.
Examples of such data sets are stochastic gradient updates (keys are parameters and element values are signed and the aggregated form is the combined gradient), search (keys are queries, elements have unit values, and the aggregated form are query-frequency pairs), or training examples for language models (keys are co-occurring terms).  

   
The data is commonly distributed across servers or devices or is streamed and the number of distinct keys is very large.  In this scenario it is beneficial to perform computations without explicitly producing a table of key-frequency pairs, as this requires storage or communication that grows linearly with the number of keys. Instead, we use {\em composable sketches} which are data structures that support
(i) {\em processing a new element} $e$:  Computing a sketch of $\mathcal{E}\cup \{e\}$ from a sketch of $\mathcal{E}$ and $e$ (ii) {\em merging}: Computing a sketch of  $\mathcal{E}_1 \cup \mathcal{E}_2$ from sketches of each $\mathcal{E}_i$ and (iii) are such that the desired output can be produced from the sketch. Composability facilitates parallel, distributed, or streaming computation. We aim to design sketches of small size, because the sketch size determines the storage and communication requirements. For sampling, we aim for the sketch size to be not much larger than the desired sample size.

\paragraph{The case for $p$'s:} Aggregation and statistics of functions of the frequencies are essential for resource allocation, planning, and management of large scale systems across application areas. The need for efficiency prompted rich theoretical and applied work on streaming and sketching methods that spanned decades~\cite{MisraGries:1982,GM:sigmod98,ams99,EV:ATAP02,hv:imc03,DLT:sigcomm03,CormodeMuthu:2005,LiuMVSB:sigcomm2016,LiuBEKBFS:Sigcomm2019}.
We study {\em $\ell_p$ sampling}, weighted sampling of keys with respect to a power $p$ of their frequency $\nu^p_x$. 
These samples support
estimates of frequency statistics of the general form $\sum_x f(\nu_x) L_x$ for functions of frequency $f$ and constitute sparse representations of the data. Low powers ($p<1$) are used to mitigate frequent keys and obtain a better resolution of the tail whereas higher powers ($p>1$)  emphasize more frequent keys.
Moreover, recent work suggests that on realistic distributions, $\ell_p$ samples for $p\in [0,2]$ provide accurate estimates for a surprisingly broad set of tasks \cite{CohenGeriPagh:ICML2020}. 
  
Sampling is at the heart of stochastic optimization. When training data is distributed~\cite{pmlr-v54-mcmahan17a}, sampling can facilitate efficient
example selection for training and efficient communication of gradient updates of model parameters. 
 Training examples are commonly weighted by a function of their frequency: Language models~\cite{Mikolov:NIPS13,PenningtonSM14:EMNLP2014} use low powers $p<1$ of frequency to mitigate the impact of frequent examples.  More generally, the function of frequency can be adjusted in the course of training to shift focus to rarer and harder examples as training progresses~\cite{curriculumlearning:ICML2009}. A sample of examples can be used to produce stochastic gradients or evaluate loss on domains of examples (expressed as frequency statistics). 
 In distributed learning, the communication of dense gradient updates can be a bottleneck, prompting the development of methods that sparsify communication while retaining accuracy~\cite{pmlr-v54-mcmahan17a,QSGD:nips2017,StichCJ:NIPS2018,DBLP:conf/nips/IvkinRUBSA19}.  Weighted sampling by the $p$-th powers of magnitudes complements existing methods that sparsify using heavy hitters (or other methods, e.g., sparsify randomly), provides adjustable emphasis to larger magnitudes, and retains sparsity as updates are composed. 
 
\ignore{ 
Frequency distributions of requests or accesses used for resource planning and allocation. For example of resource requests or search queries.  These frequency distributions can be estimated from samples.

Sampling is a sparsification method, which can be unbiased (importance sampling) or biased.   One domain is the sparsification of gradient updates in communication-efficient distributed learning  \cite{pmlr-v54-mcmahan17a,QSGD:nips2017,StichCJ:NIPS2018,DBLP:conf/nips/IvkinRUBSA19} , sampling by loss of current model and frequency, etc.
   Example unbiased sparsification of gradient updates.  Many methods proposed including randomized rounding~\cite{QSGD:nips2017}, using memory on residual updates~\cite{StichCJ:NIPS2018}, and focusing on heavy hitters~\cite{DBLP:conf/nips/IvkinRUBSA19}.  Weighted sample is an unbiased method that is low variance and importantly, preserves sparsity upon sketch merges with hierarchical aggregations. Can be combined with the memory approach of~\ref{StichCJ:NIPS2018}.
}

\ignore{
Sparsifying gradient updates for communication-efficient distributed learning \cite{pmlr-v54-mcmahan17a,QSGD:nips2017,StichCJ:NIPS2018,DBLP:conf/nips/IvkinRUBSA19} , sampling by loss of current model and frequency, etc.
   Example unbiased sparsification of gradient updates.  Many methods proposed including randomized rounding~\cite{QSGD:nips2017}, using memory on residual updates~\cite{StichCJ:NIPS2018}, and focusing on heavy hitters~\cite{DBLP:conf/nips/IvkinRUBSA19}.  Weighted sample is an unbiased method that is low variance and importantly, preserves sparsity upon sketch merges with hierarchical aggregations. Can be combined with the memory approach of~\ref{StichCJ:NIPS2018}.
  }

\paragraph{The case for WOR:}
Weighted sampling is classically considered with (WR) or without (WOR) replacement. We study here the WOR setting.
The benefits of WOR sampling were noted in very early work \cite{HT52,HatleyRao:1962,Tille:book} and are becoming more apparent with modern applications and the typical skewed distributions of massive datasets.
WOR sampling provides a broader representation and more accurate estimates, with tail norms replacing full norms in error bounds. 
Figure~\ref{effectivesize:plot} illustrates these benefits of WOR for Zipfian distributions with $\ell_1$ sampling (weighted by frequencies) and $\ell_2$ sampling (weighted by the squares of frequencies).  We can see that WR samples have a smaller effective sample size than WOR (due to high multiplicity of heavy keys) and that while both WR and WOR well-approximate the frequency distribution on heavy keys, WOR provides a much better approximation of the tail. 
\begin{figure}[ht]
\centering
\centerline{
\includegraphics[width=0.3\textwidth]{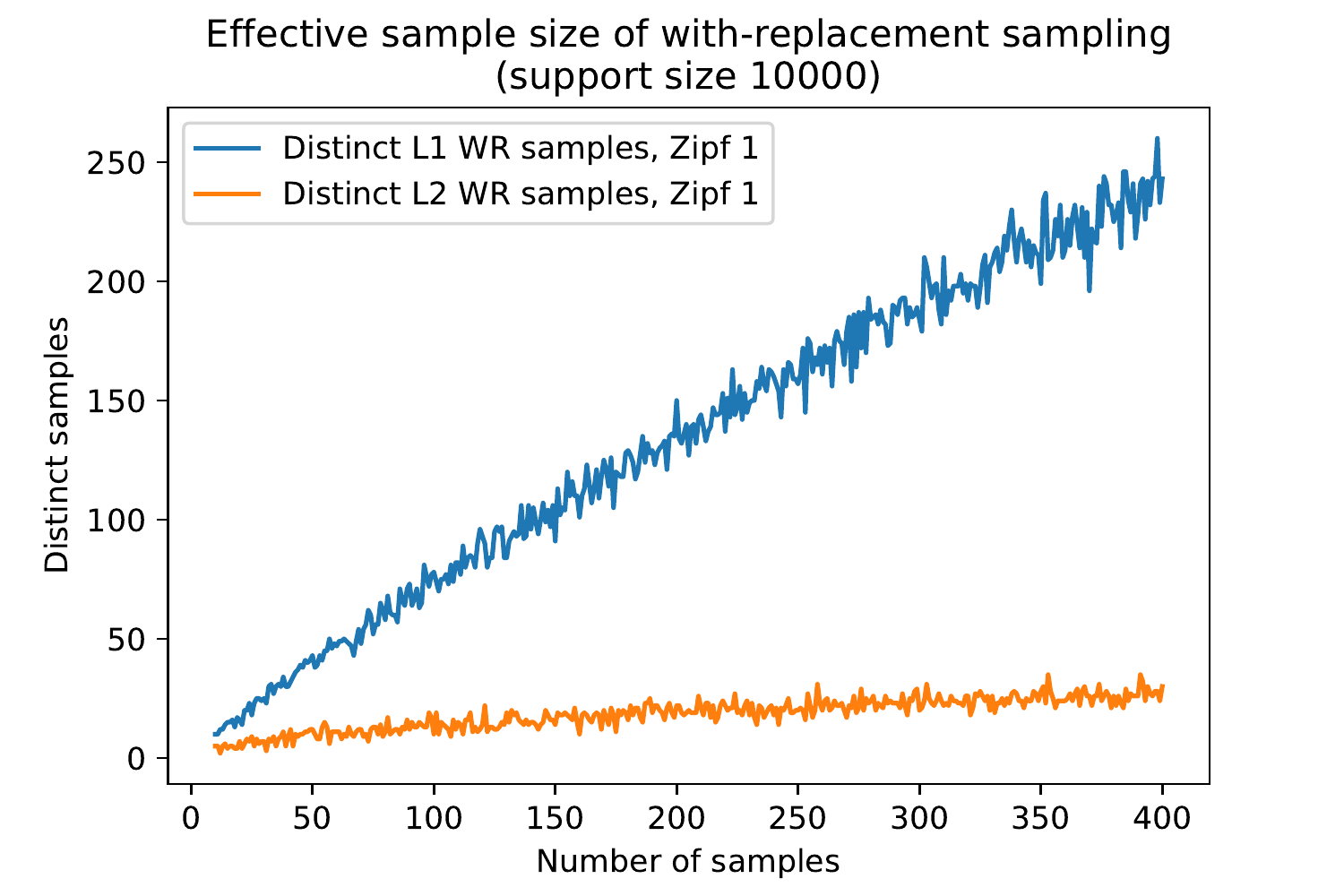}
\includegraphics[width=0.3\textwidth]{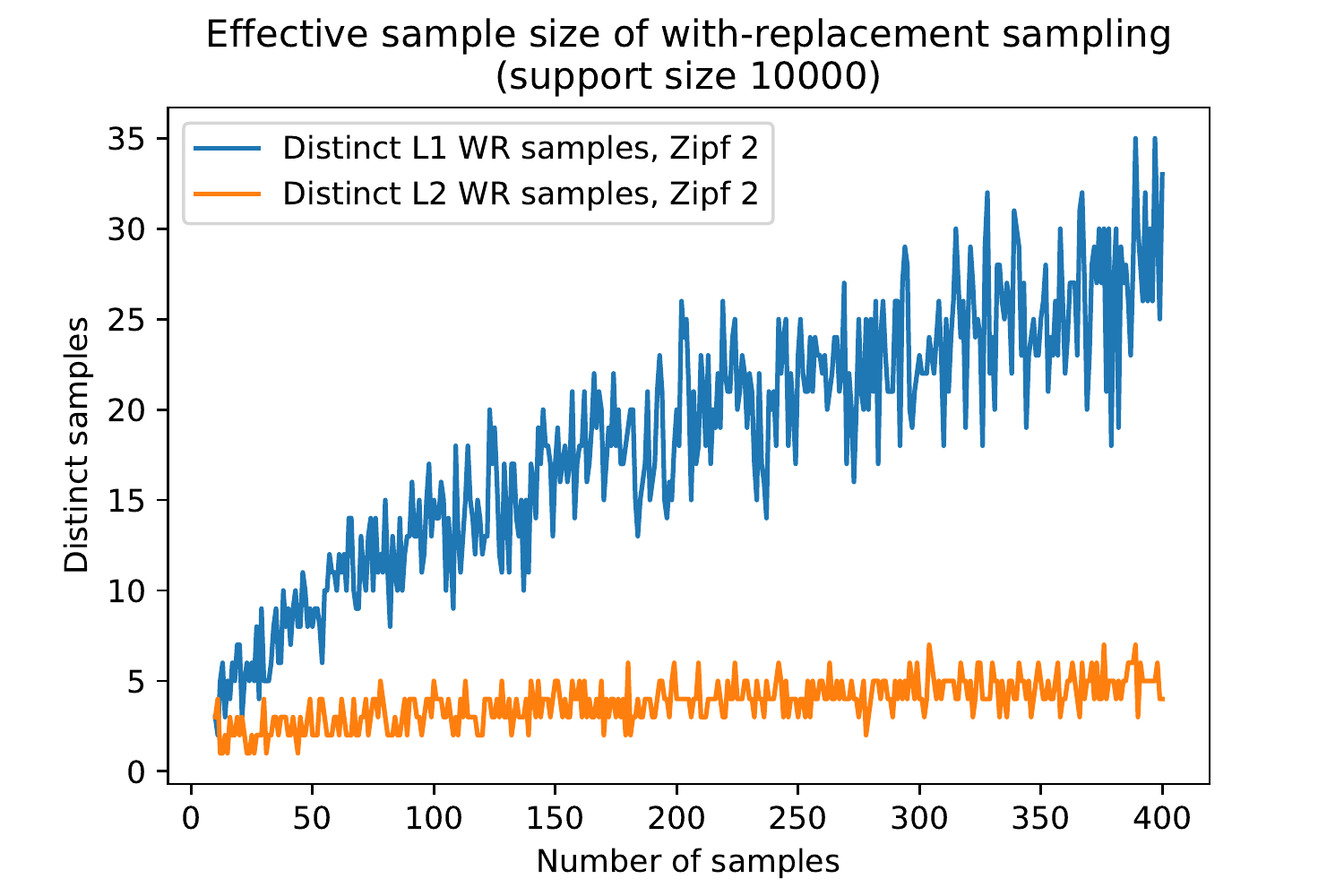}
\includegraphics[width=0.3\textwidth]{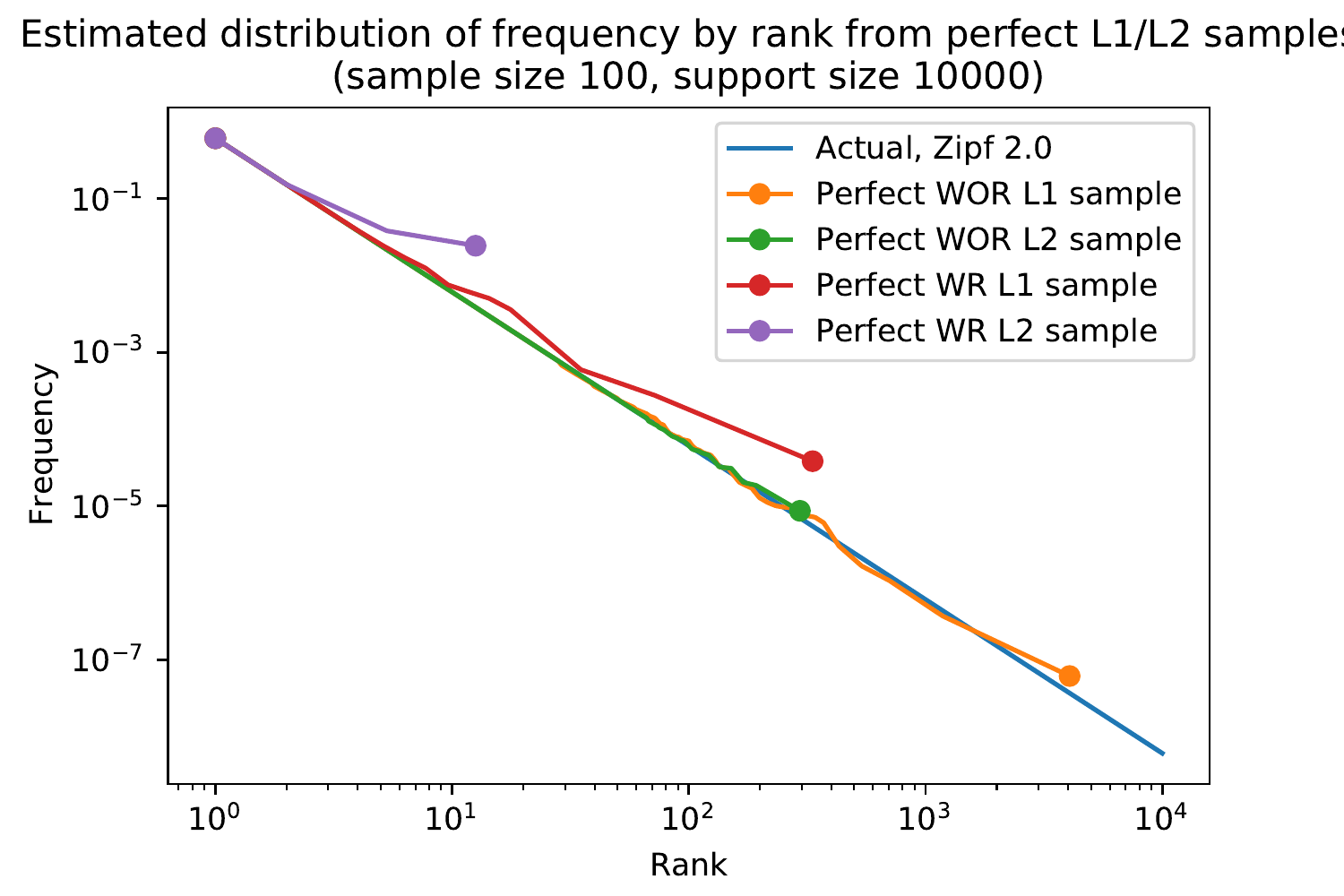}
}
\caption{WOR vs WR. Left and middle: Effective vs actual sample size $\Zipf[\alpha=1]$ and $\Zipf[\alpha=2]$, with each point reflecting a single sample. Right: Estimates of the frequency distribution $\Zipf[\alpha=2]$.
}
\label{effectivesize:plot}
\end{figure}

\paragraph{Related work.}
The sampling literature offers many WOR sampling schemes for aggregated data:~\cite{Rosen1972:successive,Tille:book,Cha82,Rosen1997a,Ohlsson_SPS:1998,DLT:jacm07,bottomk07:ds,bottomk07:est,varopt_full:CDKLT10}.  A particularly appealing technique is bottom-$k$ (order) sampling, where weights are scaled by random variables and the sample is the set of keys with top-$k$ transformed values~\cite{Rosen1997a,Ohlsson_SPS:1998,DLT:jacm07,bottomk07:ds,bottomk07:est}. 
There is also a large body of work on sketches for sampling 
unaggregated data by functions of frequency. There are two primary approaches.
The first approach involves transforming data elements so that a bottom-$k$ sample by function of frequency is converted to an easier 
problem of finding the top-$k$ keys sorted according to the {\em maximum} value of an element with the key. 
This approach yields WOR distinct ($\ell_0$) sampling~\cite{Knuth2f}, $\ell_1$ sampling~\cite{GM:sigmod98,CCD:sigmetrics12}, and sampling with respect to any concave sublinear functions of frequency (including $\ell_p$ sampling for $p\leq 1$)~\cite{freqCap:KDD2015,CohenGeri:NeurIPS2019}). These sketches 
work with non-negative element values but only provide limited support for negative updates~\cite{GemullaLH:vldb06,CCD:sigmetrics12}. 
The second approach performs WR $\ell_p$ sampling for $p\in [0,2]$ using sketches that are random projections~\cite{indyk:stable,fis08,abiw09,jw09,MoWo:SODA2010,AndoniKO:FOCS11,Jowhari:Saglam:Tardos:11,JayaramW:Focs2018}. The methods support signed updates but were not adapted to WOR sampling.
For $p>2$, a classic lower bound~\cite{ams99,b04} establishes that sketches of size polynomial in the number of distinct keys are required for worst case frequency distributions. This task has also been studied in distributed settings \cite{ctw16,jstw19}; \cite{jstw19} observes the importance of WOR in that
setting though does not allow for updates to element values. 

\ignore{
Structures known for ppswor ($\ell_1$) sampling) by frequency and (approximately) for ppswor by concave-sublinear functions of frequency.  Also known for distinct ($\ell_0$) sampling (but skew is not meaningful). These structures can support signed updates in a limited way~\cite{GemullaLH:vldb06,CCD:sigmetrics12}.

Schemes for without-replacement sampling designed for "aggregated" data.  Book~\cite{Tille:book}
ppswor~\cite{Rosen1972:successive} and generalization to bottom-$k$ (order) sampling~\cite{}.  Varopt sampling~\cite{Cha82}.  Bottom-$k$: scale by random variable and take bottom-k.

What is known for unaggregated data:  Here function of frequency matters.


When the data is presented in an {\em aggregated} form, that is, the data elements are key-weight $(x,w_x)$ pairs, 
any bottom-$k$ sampling scheme can be implemented using composable
sketches that hold $k$ keys \cite{bottomk07:ds,bottomk:VLDB2008,DLT:jacm07}.   This clearly extends to sampling with
respect to any function $f(w_x)$ of the weights, since the function can simply be applied to each element. Moreover, this extends to unaggregated data with
 {\em max-aggregation}, where multiple data elements may include the
 same key but the weight $w_x := \max_{e \mid e.\ekey=x} e.\eval$ is the maximum value of an
 element with key $x$ (rather than the frequency, which is the sum of values). 

As for sampling according to function of frequency. Ppswor with respect to 
frequency  implemented
efficiently with composable sketches~\cite{CDKLT:pods07,CCD:sigmetrics12,flowsketch:JCSS2014} (generalizing the Sample and Hold schemes for streaming data~\cite{GM:sigmod98,EV:ATAP02}).   Moreover, with small loss of
efficiency this extends to any
concave sublinear function of
frequency~\cite{freqCapfill:TALG2018,CohenGeri:NeurIPS2019}.  These
approaches use randomized mapping of data elements to ``output'' data
elements.  The problem of ppswor sampling according to (a function $f$
applied to) frequencies is transformed to computing top-$k$ ($k$
heaviest keys) by {\em max-aggregation}, which is an easy task with
composable sketches of
size $O(k)$.

Work on $\ell_p$ sampling with replacement using linear sketches.  Handle signed values. Only known way for $p>1$.  Not much done for without replacement sampling in the regime $p\in (1,2]$ and frequency.   With-replacement $\ell_p$ sampling~\cite{indyk:stable,AndoniKO:FOCS11,JayaramW:Focs2018}  more efficient methods for $k$ samples in a single sketch \cite{McGregorVV:PODS2016}.

Techniques for without replacement sampling. Bottom-$k$ (scale by a random number and take top-$k$).  PPSWOR and Priority.
}

\paragraph{Contributions:} We present WORp:  A method for WOR $\ell_p$ sampling for $p\in [0,2]$ via composable sketches of size that grows linearly with the sample size.  WORp is simple and practical and uses a bottom-$k$ transform to  reduce sampling to {\em (residual) heavy hitters} (rHH) computation on the transformed data. The technical heart of the paper is establishing that for any set of input frequencies, the keys with top-$k$ transformed frequencies are indeed rHH. In terms of implementation,  WORp only requires an off-the-shelf use of popular (and widely-implemented) HH sketches~\cite{MisraGries:1982,MM:vldb2002,ccf:icalp2002,CormodeMuthu:2005,spacesaving:ICDT2005,BerindeCIS:PODS2009}.
WORp is the first WOR $\ell_p$ sampling method (that uses sample-sized sketches) for the regime $p\in (1,2]$ and the first to fully support negative updates for $p\in (0,2]$.   As a bonus, we include practical optimizations (that preserve the theoretical guarantees) and perform experiments that demonstrate both the practicality and accuracy of WORp.\footnote{Code for the experiments is provided in the following Colab notebook 
\url{https://colab.research.google.com/drive/1JcNokk_KdQz1gfUarKUcxhnoo9CUL-fZ?usp=sharing} 
}

In addition to the above, we show that perhaps surprisingly, it is possible to obtain a WOR $\ell_p$-sample of a set of $k$ indices, for any $ p \in [0,2]$, with variation distance at most $\frac{1}{\poly(n)}$ to a true WOR $\ell_p$-sample, and using only $k \cdot \poly(\log n)$ bits of memory. Our variation distance is extremely small, and cannot be detected by any polynomial time algorithm. This makes it applicable in settings for which privacy may be a concern; indeed, this shows that no polynomial time algorithm can learn anything from the sampled output other than what follows from a simulator who outputs a WOR $\ell_p$-sample from the actual (variation distance $0$) distribution. Finally, for $p \in (0,2)$, we show that the memory of our algorithm is optimal up to an $O(\log^2 \log n)$ factor.

\section{Preliminaries}

A dataset $\mathcal{E}$ consists of {\em data elements} 
that are key value pairs $e=(e.\ekey,e.\eval)$.
 The frequency of a key $x$,
  denoted $\nu_x := \sum_{e\mid e.\ekey=x} e.\eval$, is the sum of values of elements with key $x$.   We use the notation $\bnu$ for a vector of frequencies of keys.  

For a function $f$ and vector $\boldsymbol{w}$, we denote the vector with entries $f(w_x)$ by $f(\boldsymbol{w})$. In particular, $\boldsymbol{w}^p$ is the vector with entries $w^p_x$ that are the $p$-th powers of the entries of $\boldsymbol{w}$.
For vector $\boldsymbol{w}\in \Re^n$ and index $i$, we denote by $w_{(i)}$ the value of the entry with the $i$-th largest magnitude in $\boldsymbol{w}$. We denote by $\order(\boldsymbol{w})$ the permutation of the indices $[n] = \{1, 2, \ldots, n\}$ that corresponds to decreasing order of entries by magnitude.
For $k\geq 1$, we denote by
$\tail_{k}(\boldsymbol{w})$ the vector with the $k$ entries with largest magnitudes removed (or replaced with $0$).

In the remainder of the section we review ingredients that we build on:  bottom-$k$ sampling,  implementing a bottom-$k$ transform on unaggregated data, and composable sketch structures for residual heavy hitters (rHH).

\subsection{\texorpdfstring{Bottom-$k$}{Bottom-k} sampling (ppswor and priority)} 

Bottom-$k$ sampling (also called order sampling~\cite{Rosen1997a}) is a family of without-replacement
weighted sampling schemes of a set $\{(x,w_x)\}$ of key and weight pairs.  The weights $(x,w_x)$ are transformed via
independent random maps
$w^T_x \gets \frac{w_x}{r_x}\enspace$,  where $r_x\sim \mathcal{D}$ are i.i.d. from some distribution $\mathcal{D}$.
The sample includes the pairs $(x,w_x)$ for keys $x$ that are top-$k$ by transformed magnitudes $|\boldsymbol{w^T}|$~\cite{Ohlsson_SPS:1990,Rosen1997a,DLT:jacm07,ECohen6f,BRODER:sequences97,bottomk07:ds,bottomk:VLDB2008}~\footnote{Historically, the term bottom-$k$ is due to analogous use of $1/w_x^T$, but here we find it more convenient to work with "top-$k$" }. For estimation tasks, we also include a {\em threshold}
$\tau := |w^T_{(k+1)}|$, the
$(k+1)$-st largest magnitude of transformed weights.
Bottom-$k$ schemes differ by the choice of distribution $\mathcal{D}$. Two popular choices are
Probability Proportional to Size WithOut Replacement (ppswor)~\cite{Rosen1997a,ECohen6f,bottomk:VLDB2008}
via the exponential distribution $\mathcal{D} \leftarrow  \Exp[1]$ and Priority (Sequential Poisson) sampling
\cite{Ohlsson_SPS:1990,DLT:jacm07} via the uniform distribution $\mathcal{D} \leftarrow  U[0,1]$.  Ppswor is equivalent to a  weighted sampling 
process~\cite{Rosen1972:successive} where keys are drawn successively
(without replacement) with probability proportional to their
weight. Priority sampling mimics a pure Probability Proportional
to Size (pps) sampling, where sampling probabilities are  proportional to weights (but truncated to be at most $1$).

\paragraph{Estimating statistics from a \texorpdfstring{Bottom-$k$}{Bottom-k} sample}
Bottom-$k$ samples provide us with unbiased inverse-probability~\cite{HT52}  per-key estimates on $f(w_x)$, where $f$ is a function applied to the weight~\cite{dlt:pods05,bottomk07:est,freqCapfill:TALG2018,multiobjective:2015}):
\begin{equation}\label{perkey:eq}
\widehat{f(w_x)}:=
\begin{cases}
\frac{f(w_x)}{\Pr_{r\sim\mathcal{D}}[r\leq |w_x|/\tau]} & \text{ if } x \in S\\
0 & \text{ if } x \notin S 
\end{cases} \enspace . 
\end{equation}
These estimates can be used to sparsify a vector $f(\boldsymbol{w})$ to $k$ entries or to estimate sum statistics of the general form:
\begin{equation} \label{sumstat:eq}
    \sum_x f(w_x) L_x
\end{equation}
using the unbiased estimate
\[
\widehat{\sum_x f(w_x) L_x} := \sum_x \widehat{f(w_x)} L_x = \sum_{x\in S} \widehat{f(w_x)} L_x\enspace .
\]
The quality of estimates depends on $f$ and $L$.  We measure the quality of these unbiased estimates by the sum over keys of the per-key variance. With both ppswor and priority sampling and $f(w):=w$, the sum is bounded by a respective one for pps with replacement. The per-key variance bound is
\begin{equation} \label{keyvarbound:eq}
 \Var[\widehat{w_x}] \leq  \frac{1}{k-1} w_x \|\boldsymbol{w} \|_1\enspace 
\end{equation}
and the respective sum by $\sum_x \Var[\widehat{w_x}] \leq \frac{1}{k-1} \|\boldsymbol{w} \|^2_1$.  This can be tightened to
$\Var[\widehat{w_x}] \leq  \min\{O(\frac{1}{k}) w_x \|\tail_k(\boldsymbol{w})\|_1, \exp\left(-O(k) \frac{w_x}{\|\tail_k(\boldsymbol{w})\|_1}\right) w_x^2 \}\enspace$  
and respective bound on the sum of $O(\|\tail_k(\boldsymbol{w})\|^2_1/k)$.
For skewed distributions,
$\|\tail_k(\boldsymbol{w})\|^2_1 \ll \|\boldsymbol{w} \|^2_1$ and hence WOR sampling is beneficial. 
Conveniently, bottom-$k$ estimates for different keys $x_1\not= x_2$ have non-positive correlations $\Cov[\widehat{w_{x_1}},\widehat{w_{x_2}}]\leq 0$, so the variance of sum estimates is at most the respective weighted sum of per-key variance. Note that the per-key variance for a function of weight is
$\Var[\widehat{f(w_x)}] = \frac{f(w_x)^2}{w_x^2} \Var[\widehat{w_x}]$.
WOR (and WR) estimates are more accurate (in terms of normalized variance sum) when the sampling is weighted by $f(\boldsymbol{w})$.
\ignore{
We will consider here 
bottom-$k$ sampling with respect to approximate weight.  is approximate sampling and the weight that is performed with respect to approximate $w_x$ (the entries $w^T_x$ are noised by at most $(1\pm \epsilon)$ factor).   Standard analysis shows that this increases the variance by a corresponding factor (see e.g.~\cite{multiw:VLDB2009,multiobjective:2015}). Concentration bounds are also similarly minimally affected. 
Finally, we also consider estimates where $w_x$ is also known within a relative error, which introduces an error into the numerator of~\eqref{perkey:eq}.  The estimate then becomes biased (if the noise is) but NRMSE increases by at most $O(\epsilon)$. Concentration bounds apply with the respective additive noise.
}

\subsection{\texorpdfstring{Bottom-$k$}{Bottom-k} sampling by power of frequency}
To perform bottom-$k$ sampling of $\boldsymbol{w}^p$ with distribution $\mathcal{D}$, we draw $r_x\sim \mathcal{D}$, transform the weights $w^T_x \gets w_x^p/r_x$, and 
return the top-$k$ keys in $\boldsymbol{w}^T$.  
This is equivalent to bottom-$k$ sampling the vector $\boldsymbol{w}$ using the distribution
$\mathcal{D}^{1/p}$, that is, draw $r_x \sim \mathcal{D}$, transform the weights
  \begin{equation} \label{bottomkptransform:eq}
 w^*_x \gets \frac{w_x}{r_x^{1/p}}
  \end{equation}
and return the top-$k$ keys according to $\boldsymbol{w}^*$. 
Equivalence is because $(w^*_x)^p = \left( \frac{w_x}{r_x^{1/p}}\right)^p = \frac{w_x^p}{r_x} = w^T_x$ and $f(x)=x^p$ is a monotone increasing and hence $\order(\boldsymbol{w}^*)=\order(\boldsymbol{w}^T)$.  We denote the distribution of $\boldsymbol{w}^*$ obtained from the bottom-$k$ transform \eqref{bottomkptransform:eq} as $p$-$\mathcal{D}[\boldsymbol{w}]$ and specifically, $p$-ppswor$[\boldsymbol{w}]$ when $\mathcal{D}=\Exp[1]$ and $p$-priority$[\boldsymbol{w}]$ when $\mathcal{D}=U[0,1]$. We use the term $p$-ppswor for bottom-$k$ sampling by $\Exp^{1/p}$.
 
  The linear transform \eqref{bottomkptransform:eq} can be efficiently performed over unaggregated data by using a random hash to represent $r_x$ for keys $x$ and then
 locally generating an output element 
 for each input element
 \begin{equation} \label{ptrasform:eq}
    (e.\ekey,e.\eval) \rightarrow (e.\ekey,e.\eval/r_{e.\ekey}^{1/p})
 \end{equation}
  The task of sampling by $p$-th power of frequency $\boldsymbol{\nu}^p$ is replaced by the task of top-$k$ by frequency $\nu^*_x := \sum_{e\in \mathcal{E}^* \mid e.\ekey=x} e.\eval$ on the respective output dataset $\mathcal{E}^*$, noting that
  $\nu^*_x = \nu_x/r_x^{1/p}$.  Therefore,  the top-$k$ keys in $\boldsymbol{\nu}^*$ are a bottom-$k$ sample according to $\mathcal{D}$ of  $\boldsymbol{\nu}^p$.
Note that we can approximate the input frequency $\nu'_x$ of a key $x$ from an approximate output frequency  $\widehat{\nu^*_x}$ using the hash $r_x$. Note that relative error is preserved:
\begin{equation} \label{out2in:eq}
\nu'_x \gets \widehat{\nu^*_x} r_x^{1/p}\enspace .
\end{equation}
  This per-element scaling was proposed in the {\em precision sampling} framework of
Andoni et al. \cite{AndoniKO:FOCS11} and inspired by a technique for frequency moment estimation using stable distributions~\cite{indyk:stable}.
 
  Generally, finding the top-$k$ frequencies is a task that requires 
large sketch structures of size linear in the number of keys. However, \cite{AndoniKO:FOCS11} showed that when the frequencies are drawn from $p$-priority$[\boldsymbol{w}]$ (applied to arbitrary $\boldsymbol{w}$) and $p\leq 2$ then the top-$1$ value is likely to be an $\ell_2$ {\em heavy hitter}. Here we refine the analysis and use the more subtle notion of {\em residual heavy hitters} ~\cite{BerindeCIS:PODS2009}.  We show that the top-$k$ output frequencies in $\boldsymbol{w}^*\sim p$-ppswor$[\boldsymbol{w}]$ are very likely to be $\ell_q$  residual heavy hitters (when $q\geq p$) and can be found with a sketch of size $\tilde{O}(k)$.

\subsection{Residual heavy hitters (rHH)}

 An entry in a weight vector $\boldsymbol{w}$ is called an {\em
   $\varepsilon$- heavy hitter} if $w_x \geq \varepsilon \sum_y w_y$.
 A heavy hitter with respect to a function $f$ is defined
 as a key with $f(w_x) \geq \varepsilon \sum_y f(w_y)$.   When $f(w)=w^q$, we
 refer to a key as an $\ell_q$ heavy hitter.  For $k\geq 1$ and
 $\psi>0$, a vector
 $\boldsymbol{w}$ has  {\em
    $(k,\psi)$ residual heavy hitters}~\cite{BerindeCIS:PODS2009} when
  the top-$k$ keys are  ``heavy" with respect to the tail of the
  frequencies starting at the $(k+1)$-st most frequent key, that is, $\forall i\leq k,\  w_{(i)} \geq \frac{\psi}{k}
  \|\tail_{k}(\boldsymbol{w})\|_1$.
This is equivalent to 
$\frac{\|\tail_{k}(\boldsymbol{w})\|_1}{w_{(k)}} \leq \frac{k}{\psi}$.
We say that $\boldsymbol{w}$ has rHH with respect to a function $f$ if $f(\boldsymbol{w})$ has rHH.
In particular, $\boldsymbol{w}$ has $\ell_q$ $(k,\psi)$ rHH if
\begin{equation} \label{RHHcond:eq}
\frac{\|\tail_{k}(\boldsymbol{w})\|_q^q}{w^q_{(k)}} \leq \frac{k}{\psi} \enspace .
\end{equation}

\ignore{
Similarly, we can define rHH with respect to a function of frequencey $f$.
  .  This definition includes keys that may not be heavy hitters in the traditional sense.  For a vector $\boldsymbol{w}$ we will use 
\begin{definition}
For $k\geq 1$ and  $\psi>0$, the frequencies $\boldsymbol{w}$ (sorted in non-increasing order) have {\em $\ell_p$ $(k,\psi)$ residual heavy hitters} if
\[\forall i\leq k,\  w_i^p \geq \frac{\psi}{k} \|\tail_{k}(\boldsymbol{w})\|_p^p = \frac{\psi}{k} \sum_{j>k} w_j^p \enspace .\]
\end{definition}  
An equivalent condition is the following
\begin{equation} \label{RHHcond:eq}
\psi^{-1} \geq \frac{1}{k}\frac{\sum_{j=k+1}^n w^p_j}{w_k^p} \enspace .
\end{equation}
This is equivalent because the ratio bound for the $k$-th most frequent
key applies to all $k$ most frequent keys.  This form is interesting
to us because in order to determine existence of rHH it suffices to
bound the ratio $\frac{\sum_{j=k+1}^n w^p_j}{w_k^p}$  from above.
}

\SetKwFunction{Rinit}{Initialize} 
\SetKwFunction{Rprocess}{Process} 
\SetKwFunction{Rmerge}{Merge} 
\SetKwFunction{Rest}{Est} 


 Popular composable HH sketches were adopted to rHH and include (see Table~\ref{rHH:table}): (i)~$\ell_1$ sketches designed for positive data elements. A deterministic counter-based variety~\cite{MisraGries:1982,MM:vldb2002,spacesaving:ICDT2005} with rHH adaptation~\cite{BerindeCIS:PODS2009} and the randomized \texttt{CountMin} sketch~\cite{CormodeMuthu:2005}. (ii)~$\ell_2$ sketches designed for signed data elements, notably \texttt{CountSketch}~\cite{ccf:icalp2002} with rHH analysis in~\cite{Jowhari:Saglam:Tardos:11}.
 With these designs, a sketch for $\ell_q$ $(k,\psi)$-rHH provides estimates 
$\widehat{\nu_x}$ for all keys $x$ with error bound:
\begin{equation} \label{rHHerror:eq}
    \|\widehat{\bnu}- \bnu \|^q_{\infty} \leq \frac{\psi}{k}\|\tail_{k}(\boldsymbol{\nu})\|_q^q\enspace .
\end{equation}
With randomized sketches, the error bound \eqref{rHHerror:eq} is guaranteed with some probability $1-\delta$.
\texttt{CountSketch} has the advantages of capturing top keys that are $\ell_2$ but not $\ell_1$ heavy hitters and supports signed data, but otherwise provides lower accuracy than $\ell_1$ sketches for the same sketch size. Methods also vary in supported key domains: counters natively work with key strings whereas randomized sketches work for keys from $[n]$ 
(see Appendix~\ref{rHHmore:sec} for a further discussion).
  We use these sketches off-the-shelf through the following operations: 
\begin{itemize}
    \item 
$R.\Rinit(k,\psi,\delta)$: Initialize a sketch structure
\item
$\Rmerge(R_1,R_2)$: Merge two sketches with the same parameters and internal randomization
\item
$R.\Rprocess(e)$: process a data element $e$
\item
$R.\Rest(x)$: Return an estimate of the frequency of a key $x$ .
\end{itemize}

 \begin{table}[H]
   \center
 \begin{tabular}{l|ll}
Sketch ($\ell_q$, sign) &  Size  & $\|\widehat{\bnu}- \bnu \|^q_{\infty} \leq$\\
   \hline
\texttt{Counters} ($\ell_1$, $+$) \cite{BerindeCIS:PODS2009}  & $O(\frac{k}{\psi})$ &
                                                         $\frac{\psi}{k}
                                                         \|\tail_{k}(\boldsymbol{\nu})\|_1$
   \\
\texttt{CountSketch} ($\ell_2$,$\pm$)  \cite{ccf:icalp2002}   & $O(\frac{k}{\psi}\log \frac{n}{\delta})$ & $\frac{\psi}{k}
                                                         \|\tail_{k}(\boldsymbol{\nu})\|_2^2$
 \end{tabular}
 \caption{Sketches for $\ell_q$ $(k,\psi)$ rHH.}\label{rHH:table} 
\end{table}

\ignore{
This was noted for $\ell_1$ heavy hitters by
\cite{BerindeCIS:PODS2009} and for insertion-only $\ell_p$ ($p\leq 2$)
heavy
hitters by \cite{Jowhari:Saglam:Tardos:11} 
More precisely, for any $p\leq 2$ and $k$,  a \textttt{CountSketch} of size
$O(k \log n)$ computes with high probability the $\ell_p$ $(k,k^{-1/p})$-rHH
{\bf see how $\psi$ more finely factors in to the sketch size and guarantee}.
The dependence on $\log n$ was improved in
\cite{BravermanCINWW:PODS2017} for insertion-only streams.

\subsubsection*{Count-sketch variants compute residual heavy hitters}

 We review \texttt{CountSketch} and the guarantees it provides for
 $(k,\psi)$ residual heavy hitters.
  A \texttt{CountSketch} matrix with entries $C_{ij}$ randomly partitions keys to $\ell= O(k)$ parts (using a hash function). We maintain $s$ sums for each part, where the frequency of key $x$ is multiplied by random variables $\alpha_{x j}\sim U\{-1,+1\}$ where $j\in [s]$.
  
  For the sketch to capture a key as a heavy hitter it suffices that the key is an $\Omega(1)$ heavy hitter in its part.  This happens with good probability for the residual heavy hitters, as their part is unlikely (if we take $\ell = k\log n$) to be with other top-$k$ keys and is likely to have at most $1/k$ of the remaining tail weight.  
  
In a streaming setting we can test keys as they are processed and maintain the keys with the $k$ highest frequency estimates (and their estimates).    
In a distributed aggregation setting we need a second pass over the data to estimate the frequency of each key from the completed \texttt{CountSketch} and take the $k$ keys with highest estimated frequencies.   
    
To increase confidence we may want to use several partition functions of keys.  This will increase the probability that the top-$K$ are separated out and occur with at most $1/k$ of the tail weight.

{\bf finish analysis for guarantees of count-sketch on datasets with $\ell_2$ $k$-residual $\psi$- heavy hitters}
} 

\section{WORp Overview}

Our WORp methods apply a $p$-ppswor transform to data elements  \eqref{ptrasform:eq} and (for $q\geq p$) compute an $\ell_q$ $(k,\psi)$-rHH sketch of the output elements. The rHH sketch is used to produce a sample of $k$ keys.

We would like to set $\psi$ to be low enough so that for any input frequencies $\bnu$, the top-$k$ keys by transformed frequencies $\bnu^*\sim \text{$p$-ppswor}[\bnu]$ are rHH (satisfy condition \eqref{RHHcond:eq}) with probability at least $1-\delta$.  
We refine this desiderata to be conditioned on the permutation of keys in $\order(\bnu^*)$.  This conditioning turns out not to further constrain  $\psi$ and allows us to provide the success probability uniformly for any potential $k$-sample.  Since our sketch size grows inversely with $\psi$ (see Table~\ref{rHH:table}), we want to use the maximum value that works. We will be guided by the following: 
{\small
 \begin{equation}
     \Psi_{n,k,\rho=q/p}(\delta) := \sup 
     \left\{ \psi \mid  \forall \boldsymbol{w}\in \Re^n, \pi\in S^n \Pr_{\boldsymbol{w}^*\sim\text{$p$-ppswor}[\boldsymbol{w}] \mid \order(\boldsymbol{w}^*) =\pi} \left[
          k \frac{|w^*_{(k)}|^q}{\|\tail_{k}(\boldsymbol{w}^*)\|_q^q} \leq  \psi\right] 
\leq \delta 
     \right\},
 \end{equation}
 }
 where $S^n$ denotes the set of permutations of $[n]$.
 If we set the rHH sketch parameter to $\psi \gets \varepsilon \Psi_{n,k,\rho}$ then using \eqref{rHHerror:eq}, with probability at least $1-\delta$,
\begin{equation} \label{usePsi:eq}
  \|\widehat{\bnu^*}- \bnu^* \|^q_{\infty} \leq \frac{\psi}{k} \|\tail_k(\boldsymbol{\nu}^*)\|_q^q = \varepsilon \frac{\Psi_{n,k,\rho}(\lambda)}{k}  \|\tail_k(\boldsymbol{\nu}^*)\|_q^q   \leq \varepsilon |\nu^*_{(k)}|^q\ .
\end{equation}

We establish the following lower bounds on  $\Psi_{n,k,\rho}(\delta)$:
\begin{theorem} \label{rHHprop:thm}
  There is a universal constant $C>0$ such that for 
all $n$, $k>1$, and $\rho=q/p$
\begin{align}
\text{For $\rho=1$:}\, & \Psi_{n,k,\rho}(3e^{-k}) \geq \frac{1}{C \ln \frac{n}{k}} \\
\text{For $\rho >1$:}\, & \Psi_{n,k,\rho}(3e^{-k}) \geq \frac{1}{C} \max\{ \rho-1, \frac{1}{\ln \frac{n}{k})} \}\enspace . \end{align}
\end{theorem}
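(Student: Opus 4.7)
The plan is to condition on the target permutation $\pi$, exploit the standard gap decomposition of ordered exponentials, and reduce the statement to a concentration inequality for a functional of a unit-rate Poisson process. By relabelling the coordinates I may assume $\pi$ is the identity. Writing $V_j := |w_j|^p$ and $T_j := r_j/V_j$, we have $T_j \sim \Exp(V_j)$ independently and $|w_j^*|^p = 1/T_j$, so the conditioning $\order(\boldsymbol{w}^*) = \pi$ becomes $T_1 < \cdots < T_n$; it is a standard computation that under this conditioning the successive gaps $G_i := T_i - T_{i-1}$ are independent with $G_i \sim \Exp(S_i)$, $S_i := V_i + \cdots + V_n$. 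Writing $G_i = E_i/S_i$ with $E_i \sim \Exp(1)$ i.i.d., the partial sums $Y_m := E_{k+1} + \cdots + E_{k+m}$ ($m \geq 1$) are the arrival times of a unit-rate Poisson process with counting function $N'$, and since $|w_{(i)}^*|^q = T_i^{-\rho}$ the quantity to bound is $\sum_{i>k}(T_k/T_i)^\rho$.

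Using $S_j \leq S_{k+1}$ for all $j \geq k+1$,
$$T_i - T_k \;=\; \sum_{j=k+1}^i \frac{E_j}{S_j} \;\geq\; \frac{Y_{i-k}}{S_{k+1}},$$
so setting $\mu := S_{k+1} T_k$ gives $(T_k/T_i)^\rho \leq (1 + Y_{i-k}/\mu)^{-\rho}$ and hence
$$\sum_{i>k}(T_k/T_i)^\rho \;\leq\; S_* \;:=\; \sum_{m \geq 1}(1 + Y_m/\mu)^{-\rho}.$$
Dually, using $S_j \geq S_{k+1}$ for $j \leq k$, $\mu = \sum_{j=1}^k (S_{k+1}/S_j) E_j \leq \sum_{j=1}^k E_j \sim \Erlang(k,1)$, and the standard Chernoff bound for the Erlang gives $\Pr[\mu > 2k] \leq e^{-(1-\ln 2)k} \leq e^{-k/4}$.

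It remains to bound $S_*$ conditional on $\mu \leq 2k$. For $\rho > 1$, Campbell's formula gives $E[S_*] = \int_0^\infty (1+s/\mu)^{-\rho}\,ds = \mu/(\rho-1)$, and a Poisson Chernoff---most cleanly via a dyadic decomposition of the arrivals into intervals $J_j := [(2^j-1)\mu, (2^{j+1}-1)\mu)$, on which $(1+Y_m/\mu)^{-\rho} \leq 2^{-j\rho}$ and the count $\Delta N'_j$ is Poisson of mean $2^j \mu$---yields $S_* = O(k/(\rho-1))$ with probability $1 - e^{-\Omega(k)}$, giving $\Psi \geq (\rho-1)/C$. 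For $\rho = 1$ the Campbell integral diverges, so I instead bound $M_u := |\{i>k : T_i \leq u T_k\}| \leq \min(N'((u-1)\mu), n-k)$ and use $\sum_{i>k} T_k/T_i = \int_1^\infty u^{-2} M_u\,du$: splitting at $u = n/k$, the upper piece contributes at most $(n-k)\cdot k/n \leq k$, and Campbell on the lower piece bounds the expectation by $\mu \ln(n/k) \leq 2k \ln(n/k)$, with the same Poisson concentration delivering the high-probability version, so $\Psi \geq 1/(C \ln(n/k))$. Since $(t/T_i)^\rho \leq t/T_i$ whenever $\rho \geq 1$ and $t \leq T_i$, the $\rho=1$ bound continues to apply for any $\rho > 1$; taking the better of the two bounds produces the $\max$ in the theorem, and a union bound over the two failure events (``$\mu > 2k$'' and ``$S_*$ too large'') absorbs into the stated $3e^{-k}$.

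The most delicate step is the concentration of $S_*$ uniformly in $\rho > 1$: the naive single-parameter Chernoff applied to the MGF identity $\log E[e^{t S_*}] = \int (e^{tf}-1)\,ds$ degrades when $\rho - 1$ is close to $0$ or when $\rho$ is large, so to get an $e^{-\Omega(k)}$ tail that does not deteriorate as $\rho$ varies I rely on the dyadic Poisson Chernoff applied block-by-block, with each block's Chernoff parameter optimized for that scale. Matching the two regimes (large $\rho$ versus $\rho$ near $1$) at the boundary is the main bookkeeping, but once the dyadic decomposition is in place the rest of the argument is clean.
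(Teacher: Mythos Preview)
Your proposal is correct and follows essentially the same route as the paper: the gap decomposition of ordered exponentials under the conditioning $\order(\boldsymbol{w}^*)=\pi$, the monotonicity of the tail sums $S_j$ around the index $k$ to strip out the dependence on $\boldsymbol{w}$, and a dyadic concentration argument. The paper packages the second step slightly more cleanly by observing that once $\mu$ is replaced by its upper bound $\sum_{j\leq k}E_j$, your $S_*$ becomes exactly the single dominating random variable $R_{k,n,\rho}=\sum_{i>k}\bigl(\sum_{j\leq k}Z_j\big/\sum_{j\leq i}Z_j\bigr)^\rho$ with $Z_j$ i.i.d.\ $\Exp(1)$, which depends on neither $\boldsymbol{w}$ nor $\pi$; it then bounds $R$ via a dyadic split of the \emph{index} range into stretches of length $2^h k$ with Erlang tail bounds on each stretch, which is dual to your dyadic split of the \emph{time} axis with Poisson counts. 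One bookkeeping slip: the threshold $\mu>2k$ yields only $\Pr[\mu>2k]\leq e^{-(1-\ln 2)k}\approx e^{-0.31k}$, so your union bound does not absorb into the stated $3e^{-k}$; the paper takes the cutoff at $3.2k$, for which the Erlang upper tail is at most $e^{-k}$, and this is the trivial fix you need.
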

To set sketch parameters in implementations, we
approximate $\Psi$ using simulations of what we establish to be a ``worst case" frequency distribution.
For this we use a specification of a ``worst-case" set of frequencies as part of the proof of Theorem~\ref{rHHprop:thm} (See Appendix~\ref{psiapprox:sec}).
From simulations we obtain that very small values of $C < 2$ suffice for $\delta = 0.01$, $\rho\in\{1,2\}$, and $k\geq 10$.


\SetKwFunction{keyhash}{KeyHash} 

We analyse a few WORp variants.  The first we consider returns an exact $p$-ppswor sample, including exact frequencies of keys, using two passes.  We then consider a variant that returns an approximate $p$-ppswor sample in a single pass.  The two-pass method uses smaller rHH sketches and efficiently works with keys that are arbitrary strings.  

We also provide another rHH-based technique that provides a guaranteed very small variation distance on $k$-tuples in a single pass.

  \section{Two-pass WORp} \label{twopass:sec}
  
  \begin{table}
      \centering
 {\small
\begin{tabular}{l | l l | l}
& \multicolumn{2}{c}{Sketch size} & \\
sign, $p$ & words & key strings & $\Pr[\text{success}]$\\
\hline
$\pm$, $p<2$ & $O(k\log n)$ & $O(k)$ & $(1-\frac{1}{\text{poly}(n)})(1-3e^{-k})$ \\
$\pm$, $p=2$ & $O(k\log^2 n)$ & $O(k)$ & $(1-\frac{1}{\text{poly}(n)})(1-3e^{-k})$ \\
$+$, $p<1$ & $O(k)$ & $O(k)$ & $1-3e^{-k}$ \\
$+$, $p=1$ & $O(k\log n)$ & $O(k)$ & $1-3e^{-k}$ 
\end{tabular} \\
}     
      \caption{Two-pass ppswor sampling of $k$ keys according to $\bnu^p$. Sketch size (memory words and number of stored key strings). For $p\in (0,2]$ and signed ($\pm$) or positive ($+$) value elements.}
      \label{twopass:tab}
  \end{table}
We design a two-pass method for ppswor sampling according to $\bnu^p$ for $p\in (0,2]$ (Equivalently, a $p$-ppswor sample according to $\bnu$): 

\begin{trivlist}
 \item[$\bullet$] {\bf Pass I:}  
We compute an $\ell_q$ $(k+1,\psi)$-rHH sketch $R$ of the transformed data elements
\begin{equation} \label{oelem:eq}
(\keyhash(e.\ekey),e.\eval/r_{e.\ekey}^{1/p})\enspace .
\end{equation}
A hash $\keyhash\rightarrow [n]$ is applied when keys have a large or non-integer domain to facilitate use of \texttt{CountSketch} or reduce storage with \texttt{Counters}.
We set $\psi \gets \frac{1}{3^q} \Psi_{n,k,\rho}(\delta)$. 
\item[$\bullet$] {\bf Pass II:}     
We collect key strings $x$ (if needed) and corresponding exact frequencies $\nu_x$ for keys with the $B k$ largest  $|\widehat{\nu^*_x}|$, where $B$ is a constant (see below) and 
$\widehat{\nu^*_x} := R.\Rest[\keyhash(x)]$ are the estimates of $\nu^*_x$ provided by $R$.
For this purpose we use a composable top-$(B k)$ sketch structure $T$. The size of $T$ is dominated by storing $B k$ key strings. 

\item[$\bullet$] {\bf Producing a $p$-ppswor sample from $T$:}
Compute exact transformed frequencies $\nu^*_x \gets \nu_x r^{1/p}_x$ for stored keys $x\in T$.  
Our sample is the set of key frequency pairs $(x,\nu_x)$ for the top-$k$ stored keys by  $\nu^*_x$.  The threshold $\tau$ is the $(k+1)^{\text{th}}$ largest $\nu^*_x$ over stored keys.  
 \item[$\bullet$] {\bf Estimation:} We compute per-key estimates as in~\eqref{perkey:eq}: Plugging in $\mathcal{D} = \Exp[1]^{1/p}$ for $p$-ppswor, we have $\widehat{f(\nu_x)}:=0$ for $x\not\in S$ and for $x\in S$ is $\widehat{f(\nu_x)} := \frac{f(\nu_x)}{1-\exp\left(- (\frac{\nu_x}{\tau})^p\right)}$.
 \end{trivlist}
  
  We establish that the method returns the $p$-ppswor sample with probability at least $1-\delta$, propose practical optimizations, and analyze the sketch size:
     \begin{theorem} \label{twopass:thm}
   The 2-pass method returns a $p$-ppswor sample of size $k$ according to $\bnu$
   with success probability and composable sketch sizes as 
detailed in Table~\ref{twopass:tab}. 
The success probability is defined to be that of returning the exact top-$k$ keys by transformed frequencies. The bound applies even when  conditioned
on the top-$k$ being a particular $k$-tuple.
\end{theorem}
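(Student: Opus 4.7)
I intend to reduce the correctness of the two-pass algorithm to the event that Pass~I's rHH sketch captures the true top-$k$ of the transformed vector $\bnu^*$ within a short candidate list, and then argue that Pass~II finishes the job exactly. The $p$-ppswor reduction \eqref{bottomkptransform:eq}--\eqref{ptrasform:eq} already says that a $p$-ppswor sample of $\bnu$ is, by definition, the top-$k$ keys of $|\bnu^*|$ together with their exact $\nu_x$ and threshold $\tau = |\nu^*_{(k+1)}|$; the estimator formula in the theorem is then the specialization of \eqref{perkey:eq} to $\mathcal{D}=\Exp[1]^{1/p}$. So the entire theorem follows once we show that, with the claimed probability and uniformly over the identity of the true top-$k$ set, the algorithm outputs exactly those keys.

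Fix any permutation $\pi$ of $[n]$. By Theorem~\ref{rHHprop:thm} and the definition of $\Psi := \Psi_{n,k,\rho}(3e^{-k})$, conditional on $\order(\bnu^*)=\pi$, with probability at least $1-3e^{-k}$ the vector $\bnu^*$ is $\ell_q$-$(k,\Psi)$-rHH; this conditioning is precisely what delivers the ``top-$k$ being a particular $k$-tuple'' clause. Set the rHH-sketch parameter to $\psi\gets\Psi/3^q$. Then \eqref{usePsi:eq} (augmented, for the randomized \texttt{CountSketch}, by a $1-1/\poly(n)$ sketch-success event) yields
\[\|\widehat{\bnu^*}-\bnu^*\|_\infty \;\le\; \tfrac{1}{3}\,|\nu^*_{(k)}|.\]
Every true top-$k$ key therefore has estimate at least $\tfrac{2}{3}|\nu^*_{(k)}|$, while any non-top-$k$ key with estimate above that threshold must have true magnitude at least $\tfrac{1}{3}|\nu^*_{(k)}|$; the $\ell_q$-tail bound $\|\tail_k(\bnu^*)\|_q^q \le (k/\Psi)|\nu^*_{(k)}|^q$ limits the count of such confounders to $O(3^q k/\Psi)$. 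Choosing $B$ accordingly, the top-$Bk$ estimated keys form a superset of the true top-$k$.

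Pass~II then retrieves exact frequencies $\nu_x$ for these $Bk$ candidates via the composable top-$(Bk)$ structure $T$, computes $\nu^*_x = \nu_x/r_x^{1/p}$ from the shared hash $r_x$, and picks the top-$k$ exactly. The overall failure probability is the union of the $3e^{-k}$ rHH-failure event and the $1/\poly(n)$ randomized-sketch event, matching Table~\ref{twopass:tab}. The sketch-size rows follow by plugging the two cases of Theorem~\ref{rHHprop:thm} into Table~\ref{rHH:table}: when $\rho>1$ is bounded away from $1$ one has $\Psi=\Theta(1)$, yielding $O(k)$ or $O(k\log n)$ words for Counters or \texttt{CountSketch}; when $\rho=1$ one has $\Psi=\Omega(1/\ln(n/k))$, producing one extra logarithmic factor for $p=1$ (Counters) and $p=2$ (\texttt{CountSketch}). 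The quantitatively delicate step that I expect to be the main obstacle is controlling the number of distinct key strings stored by Pass~II: $\Psi=\Theta(1)$ immediately gives $B=O(1)$, but the $\rho=1$ regime needs a careful accounting — possibly by choosing $\psi$ somewhat smaller than $\Psi/3^q$ or by a sharper confounder count that exploits randomness in $\bnu^*$ — to keep the key-string count at the $O(k)$ claimed in Table~\ref{twopass:tab}.
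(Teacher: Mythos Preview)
Your high-level reduction matches the paper's: use \eqref{usePsi:eq} with $\psi=\Psi/3^q$ to get uniform error $\tfrac13|\nu^*_{(k+1)}|$, deduce that the true top-$(k+1)$ sit among the top-$B(k+1)$ estimated keys, and let Pass~II recover exact $\nu^*_x$ and reorder. The conditional-on-$\pi$ guarantee and the sketch-size bookkeeping are handled the same way.

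The one genuine gap you flag is real, and your first proposed fix does not work while the second one is the right idea. Your deterministic confounder count is exactly the paper's remark that ``for any frequency distribution with rHH, it suffices to store $O(k/\psi)$ keys'': any non-top-$k$ key whose estimate exceeds $\tfrac23|\nu^*_{(k)}|$ must have true magnitude $\ge\tfrac13|\nu^*_{(k)}|$, and there are at most $3^q k/\Psi$ such keys by the tail bound. Taking $\psi$ smaller does not help here: the number of large-true-value confounders is controlled by $\Psi$ (a property of the random vector $\bnu^*$), not by the sketch parameter $\psi$, so shrinking $\psi$ only tightens the error below what is needed without reducing the candidate count below $\Theta(k/\Psi)$. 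In the $\rho=1$ rows this is $\Theta(k\log(n/k))$ key strings, not the $O(k)$ in the table.

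The paper closes the gap via your second suggestion, exploiting the randomness of the transform rather than the rHH inequality. It proves a separate domination lemma (Lemma~\ref{domB:lemma}): conditional on any $\pi$, the ratio $|\nu^*_{(k_2)}|/|\nu^*_{(k_1)}|$ is stochastically dominated by a function of $\bigl(\sum_{i\le k_1}Z_i\bigr)/\bigl(\sum_{i\le k_2}Z_i\bigr)$ with $Z_i$ i.i.d.\ $\Exp[1]$. Then the same Erlang concentration that drives Theorem~\ref{rHHprop:thm} (see Corollary~\ref{constB:coro}) shows that under the very ``no bad event'' condition already being assumed, $\sum_{i\le k}Z_i/\sum_{i\le Bk}Z_i\le 1/3$ for an absolute constant $B$. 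Combined with the sufficient condition $|\nu^*_{(B(k+1))}|\le\tfrac13|\nu^*_{(k+1)}|$ for \eqref{proptop:eq}, this gives $B=O(1)$ uniformly in $\rho\ge1$, with no additional failure probability. That is the missing ingredient in your plan.
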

\begin{proof}
From \eqref{usePsi:eq},  the estimates 
$\widehat{\nu^*_x} = R.\Rest[\keyhash(x)]$ of $\nu^*_x$ are such that: 
\begin{equation} \label{approx:eq}
    \Pr\left[\forall x\in \{e.\ekey \mid e \in \mathcal{E}\} , 
    |\widehat{\nu^*_x}-\nu^*_x| \leq  \frac{1}{3} |\nu^*_{(k+1)}|\right] \geq 1-\delta\enspace .
\end{equation}
   
We set $B$ in the second pass so that the following holds:
\begin{equation}\label{proptop:eq}
\text{The top-$(k+1)$ keys by $\bnu^*$ are a subset of the top-$(B (k+1))$ keys by $\widehat{\bnu^*}$.} 
\end{equation}
Note that for any frequency distribution with rHH, it suffices to store $O(k/\psi)$ keys to have \eqref{proptop:eq}.  We establish \onlyinproc{(see the appendix)} that 
for our particular distributions, a constant $B$ suffices.
\notinproc{
For that we used the following:
\begin{lemma}
A sufficient condition for property~\eqref{proptop:eq} is
that $|\nu^*_{(B (k+1))}| \leq \frac{1}{3} |\nu^*_{((k+1))}|$.
\end{lemma}
\begin{proof}
Note that $|\widehat{\nu^*_x}| \geq \frac{2}{3} |\nu^*_{(k+1)}|$ for keys that are top-$(k+1)$ by $\bnu^*$ and $|\widehat{\nu^*}_{(B(k+1))}|\leq |\nu^*_{(B (k+1))}| + \frac{1}{3} |\nu^*{(k+1)}|$.  Hence $|\widehat{\nu^*_x}| \geq |\widehat{\nu^*}_{(B(k+1)}|$ for all keys that are top-$(k+1)$ by $\bnu^*$.
\end{proof}
We then use Lemma~\ref{domB:lemma} to express a ``worst-case" distribution for the ratio
$\nu^*_{B(k+1)}/\nu^*_(k+1)$ and use the latter (using Corollary~\ref{constB:coro}) to show that the setting of $\Psi(\delta)$ according to 
our proof of Theorem~\ref{rHHprop:thm} (Appendix~\ref{Oproof:sec}-\ref{proofRbound:sec}) implies that the conditions that guarantee the rHH property will also imply a ratio of at most $1/3$ with a constant $B$.
}

Correctness for the final sample follows from property~\eqref{proptop:eq} : $T$ storing the top-$(k+1)$ keys in the data according to $\bnu^*$.
 To conclude the proof of Theorem~\ref{twopass:thm} we need to specify the rHH sketch structure we use.
 From Theorem~\ref{rHHprop:thm} we obtain a lower bound on $\Psi_{n,k,\rho}$ for $\delta = 3e^{-k}$ and we use it to set $\psi$.   For our rHH sketch we use  \texttt{CountSketch} ($q=2$ and supports signed values) or \texttt{Counters} ($q=1$ and positive values). 
 The top two lines in Table~\ref{twopass:tab} are for \texttt{CountSketch} and the next two lines are for \texttt{Counters}.  The rHH sketch sizes follow from $\psi$ and Table~\ref{rHH:table}.
\end{proof}

\notinproc{
\subsection{Practical optimizations}
We propose practical optimizations that improve efficiency without compromising quality guarantees.

The first optimization allows us to store $k'\ll B(k+1)$ keys in the second pass: We always store the top-$(k+1)$ keys by $\widehat{\bnu^*}$ but beyond that only store keys if they satisfy
\begin{equation}\label{condopt:eq}
    \widehat{\nu^*}_x \geq \frac{1}{2} \widehat{\nu^*}_{(k+1)}\enspace ,
\end{equation}
where $\bnu^*$ is with respect to data elements processed by the current sketch.  
We establish correctness:
\begin{lemma}
\begin{enumerate}
    \item 
There is a composable structure that only stores keys that satisfy \eqref{condopt:eq}
and collects exact frequencies for these keys.  
\item
If \eqref{approx:eq} holds, the top-$(k+1)$ keys by  
$\bnu^*$ satisfy~\eqref{condopt:eq} (and hence will be stored in the sketch).
\end{enumerate}
\end{lemma}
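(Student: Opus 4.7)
The lemma has two parts of quite different character, so I will treat them separately. For Part 1 my plan is to build each partial sketch out of a pair $(R,T)$, where $R$ is the $\ell_q$-rHH sketch on the transformed stream and $T$ is a hash table of exact aggregated values. Processing an element $(x,v)$ calls $R.\Rprocess(x,v)$, then sets $T[x]\mathrel{+}= v$ (inserting the entry at $0$ if absent), and finally sweeps $T$ to evict every key $y$ with $|\widehat{\nu^*_y}| < \tfrac{1}{2}\,|\widehat{\nu^*}_{(k+1)}|$ read from $R$. Merging is the obvious composable operation: merge the two rHH sketches, take $T_1\cup T_2$ with counts summed on shared keys, and re-apply the eviction rule using the merged $R$. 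By design every key in $T$ satisfies \eqref{condopt:eq}, and for a retained key $x$ the stored count equals $\sum_{e:\,e.\ekey=x} e.\eval$, because every such element has been charged to $T[x]$ and $T[x]$ is never reset.

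For Part 2 the plan is a short two-sided estimate. Let $M := |\nu^*_{(k+1)}|$ and suppose \eqref{approx:eq} holds. On one side, any $x$ in the top-$(k+1)$ by $|\bnu^*|$ has $|\nu^*_x|\ge M$, so \eqref{approx:eq} gives $|\widehat{\nu^*_x}|\ge |\nu^*_x|-\tfrac{1}{3} M\ge \tfrac{2}{3} M$. On the other side, at most $k$ keys satisfy $|\nu^*_y|>M$, and every remaining key has $|\widehat{\nu^*_y}|\le |\nu^*_y|+\tfrac{1}{3} M\le \tfrac{4}{3} M$ by \eqref{approx:eq}; hence at most $k$ estimates can exceed $\tfrac{4}{3} M$, and so the $(k+1)$-th largest estimate satisfies $|\widehat{\nu^*}_{(k+1)}|\le \tfrac{4}{3} M$. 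Dividing, $|\widehat{\nu^*_x}|/|\widehat{\nu^*}_{(k+1)}|\ge (2/3)/(4/3)=1/2$, which is exactly \eqref{condopt:eq}.

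The real work is in Part 1, not Part 2. The delicate point is that \eqref{condopt:eq} is defined relative to the data aggregated by the current sketch, so a merge can flip keys across the threshold in either direction, and the eviction rule above --- which depends on instantaneous rHH estimates --- must not throw away a key whose count will be needed after a future merge. Since $T[x]$ is only ever incremented and never reset, the structure remains correct provided eviction is never premature. My plan to secure this is to rely on a monotonicity property of the rHH estimates under composition: the merged estimate for $x$ dominates each piecewise estimate, and the threshold $|\widehat{\nu^*}_{(k+1)}|$ behaves the same way, so that any key retained at the root of the merge tree was retained at every intermediate stage and its stored count in $T$ is exact. Verifying this monotonicity directly from the concrete rHH sketch being used (\texttt{Counters} or \texttt{CountSketch}) is the most careful step, and is the one I expect to be the main obstacle.
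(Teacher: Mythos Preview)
Your Part 2 argument is correct and essentially identical to the paper's: both bound $|\widehat{\nu^*_x}|\ge\tfrac{2}{3}|\nu^*_{(k+1)}|$ for the top-$(k+1)$ keys, bound $|\widehat{\nu^*}_{(k+1)}|\le\tfrac{4}{3}|\nu^*_{(k+1)}|$, and divide.

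Part 1 has a genuine gap that comes from misreading the setting. This lemma lives in the two-pass method: $R$ is the rHH sketch that was \emph{completed in Pass~I} and is fixed throughout Pass~II. The composable structure being analyzed here is only the Pass~II structure $T$; the estimates $\widehat{\nu^*_x}=R.\Rest[\keyhash(x)]$ never change during Pass~II. With that in hand the paper's argument is a one-liner: as more elements are processed (or two Pass~II sketches are merged), the set of distinct keys seen so far only grows, so the $(k+1)$-th largest of a fixed set of numbers restricted to a growing subset can only increase. Hence the threshold $\tfrac{1}{2}\widehat{\nu^*}_{(k+1)}$ is monotone nondecreasing, the condition \eqref{condopt:eq} only becomes more stringent, and any key satisfying it at the end satisfied it at every earlier stage --- so it was never evicted and $T$ holds its exact frequency.

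Your construction instead updates $R$ inline (``Processing an element $(x,v)$ calls $R.\Rprocess(x,v)$''), turning this into a one-pass object and forcing you to argue monotonicity of the rHH \emph{estimates themselves} under composition. That property simply does not hold for \texttt{CountSketch}: its estimate of $\nu^*_x$ is a median of sign-randomized bucket sums, and merging two sketches (or processing additional elements) can move that median in either direction. So the ``main obstacle'' you flagged is not merely careful --- it is false for the sketch actually in use. Once you recognize that $R$ is frozen after Pass~I, the obstacle evaporates and the monotonicity you need is the trivial one about order statistics over a growing index set.
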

\begin{proof}
(i) The structure is a slight modification of a top-$k'$ structure.
Since $\widehat{\nu^*}_{(k+1)}$ can only increase as more distinct keys are processed, the condition \eqref{condopt:eq} only becomes more stringent as we merge sketches and process elements.  Therefore if a key satisfies the condition at some point it would have satisfied the condition when elements with the key were previously processed and therefore we can collect exact frequencies.

(ii) From the uniform approximation \eqref{approx:eq}, we have $\widehat{\nu^*}_{(k+1)} \leq \frac{4}{3} \nu^*_{(k+1)}$.  Let $x$ be the $(k+1)$-th key by $|\bnu^*|$.  Its
estimated transformed frequency is at at least 
$\widehat{\nu^*_x} \geq \frac{2}{3} \nu^*_{(k+1)} \geq  \frac{2}{3} \cdot \frac{3}{4}  \widehat{\nu^*}_{(k+1)} = \frac{1}{2} \widehat{\nu^*}_{(k+1)}$.  Therefore, if we store all keys $x$ with $\widehat{\nu^*_x}\geq \frac{1}{2} \widehat{\nu^*}_{(k+1)}$ we store the top-$(k+1)$ keys by $\nu^*_x$.
\end{proof}

A second optimization allows us to extract a larger effective sample from the sketch with $k'\geq k$ keys.
This can be done when we can certify that the top-$k'$ keys by $\bnu^*$ in the transformed data are stored in the sketch $T$.  Using a larger sample is helpful as it can only improve (in expectation) estimation quality (see e.g., \cite{CK:sigmetrics09,multiw:VLDB2009}).  
To extend this, 
we compute the 
uniform error bound $\nu^*_{(k+1)}/3$ (available because the top-$(k+1)$ keys by $\nu^*$ are stored).  Let $L \gets \min_{x\in T} \widehat{\nu^*}_x$.  We obtain that any key $x$ in the dataset with $\nu^*_x \geq L+ \nu^*_{(k+1)}/3$ must be stored in $T$.  Our  final sample contains these keys with the minimum $\nu^*_x$ in the set used as 
the threshold $\tau$.

}

\section{One-pass WORp} \label{onepass:sec}

Our 1-pass WORp yields a sample of size $k$ that approximates a $p$-ppswor sample of the same size and provides similar guarantees on estimation quality. 
\begin{trivlist}
    \item[$\bullet$] {\bf Sketch:}
    For $q\geq p$ and $\varepsilon \in (0,\frac{1}{3}]$
    Compute an $\ell_q$ $(k+1,\psi)$-rHH sketch $R$ of the transformed data elements \eqref{ptrasform:eq} where
$\psi \gets \varepsilon^q \Psi_{n,k+1,\rho}$.
\item[$\bullet$]    {\bf Produce a sample:}
   Our sample $S$ includes the keys with top-$k$ estimated transformed frequencies  $\widehat{\nu^*_x} := R.\Rest[x]$.
   For each key $x\in S$ we include $(x, \nu'_x)$, where the approximate frequency $\nu'_x\gets \widehat{\nu^*_x} r^{1/p}_x$ is according to \eqref{out2in:eq}.  We include with the sample the threshold $\tau \gets\widehat{\nu^*}_{(k+1)}$.  
\item[$\bullet$] {\bf Estimation:} We treat the sample as a $p$-ppswor sample and compute 
per-key estimates as in~\eqref{perkey:eq}, substituting approximate frequencies $\nu'_x$ for actual frequencies $\nu_x$ of sampled keys and the 1-pass threshold $\widehat{\nu^*}_{(k+1)}$ for the exact $\nu^*_{(k+1)}$.  The estimate is $\widehat{f(\nu_x)}:=0$ for $x\not\in S$ and for $x\in S$ is:
\begin{equation} \label{xppsworest:eq}
\widehat{f(\nu_x)} := \frac{f(\nu'_x)}{1-\exp\left(- (\frac{\nu'_x}{\widehat{\nu^*}_{(k+1)}})^p\right)} =  \frac{ f(\widehat{\nu^*_x} r_x^{1/p})}{1- \exp\left(- r_x (\frac{\widehat{\nu^*_x}}{\widehat{\nu^*}_{(k+1)}})^p\right)}
\end{equation}
\end{trivlist}

We relate the quality of the estimates to those of a perfect $p$-ppswor. Since our 1-pass estimates are biased (unlike the unbiased perfect $p$-ppswor), we consider both bias and variance.  The proof is provided in Appendix~\ref{onepassquality:sec}.
\begin{theorem} \label{onepassquality:thm}
Let $f(w)$ be such that $|f((1+\varepsilon)w)-f(w)| \leq c \varepsilon f(w)$ for some $c>0$ and $\varepsilon\in [0,1/2]$.
Let $\widehat{f(\nu_x)}$ be per-key estimates obtained with a one-pass WORp sample and let $\widehat{f(\nu_x)}'$ be the respective estimates obtained with a (perfect) $p$-ppswor sample.  Then $\Bias[\widehat{f(\nu_x)}] \leq O(\varepsilon) f(\nu_x)$ and
$\MSE[\widehat{f(\nu_x)}] \leq (1+O(\varepsilon)) \Var[\widehat{f(\nu_x)}'] + O(\epsilon) f(\nu_x)^2$.
\end{theorem}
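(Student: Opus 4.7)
The plan is to couple the one-pass estimator $\widehat{f(\nu_x)}$ with an ideal $p$-ppswor estimator $\widehat{f(\nu_x)}'$ on the same hashes $r_x$ and the same transformed data, and to compare them pointwise. I condition on the good event $E \equiv \{\|\widehat{\bnu^*} - \bnu^*\|_\infty \leq \varepsilon|\nu^*_{(k+1)}|\}$, which by the setting $\psi\gets \varepsilon^q\Psi_{n,k+1,\rho}$ together with \eqref{usePsi:eq} holds with probability at least $1-3e^{-k}$. The two estimators can differ in three ways: the numerator substitutes $f(\nu'_x)$ for $f(\nu_x)$, the argument of the denominator substitutes $\nu'_x/\widehat{\nu^*}_{(k+1)}$ for $\nu_x/\nu^*_{(k+1)}$, and the set of sampled keys may differ, but only for keys whose transformed value lies within the sketch's error envelope of the threshold.

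On $E$, any key $x$ with $\nu^*_x\geq \nu^*_{(k+1)}$ satisfies $\widehat{\nu^*_x}=(1\pm\varepsilon)\nu^*_x$, so $\nu'_x=\widehat{\nu^*_x}r_x^{1/p}=(1\pm O(\varepsilon))\nu_x$; a short argument on order statistics gives $\widehat{\nu^*}_{(k+1)}=(1\pm O(\varepsilon))\nu^*_{(k+1)}$. The Lipschitz hypothesis on $f$ then yields $f(\nu'_x)=(1\pm O(\varepsilon))f(\nu_x)$, and the elementary inequality $ze^{-z}\leq 1-e^{-z}$ implies that an $O(\varepsilon)$ relative perturbation of $z=(\nu_x/\nu^*_{(k+1)})^p$ produces only an $O(\varepsilon)$ relative perturbation of $1-\exp(-z)$. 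Combining, $\widehat{f(\nu_x)}=(1\pm O(\varepsilon))\,\widehat{f(\nu_x)}'$ whenever the two samples agree on $x$ and $E$ holds.

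Inclusion of $x$ can disagree only within the narrow boundary band $|\nu^*_x-\nu^*_{(k+1)}|\leq O(\varepsilon)\nu^*_{(k+1)}$. The conditional probability that $x$ lands in this band, given the other $r_y$ and hence $\nu^*_{(k+1)}$, is $O(\varepsilon)$ times the ideal inclusion probability $1-\exp(-(\nu_x/\nu^*_{(k+1)})^p)$, because $r_x\sim\Exp[1]$ has bounded density. At the boundary, both estimators have magnitude $O(f(\nu_x))$, so this region contributes $O(\varepsilon)f(\nu_x)^2$ to $\E[(\widehat{f(\nu_x)}-\widehat{f(\nu_x)}')^2]$; the failure event $\bar E$ has probability $O(e^{-k})$ and contributes a lower-order term (absorbed into the same slack after truncating the estimator at a polynomial threshold or using worst-case polynomial bounds on the sketch output). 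Since $\widehat{f(\nu_x)}'$ is unbiased, $\Bias[\widehat{f(\nu_x)}]=\E[\widehat{f(\nu_x)}-\widehat{f(\nu_x)}']$ is $O(\varepsilon)f(\nu_x)$, and the MSE bound follows from the decomposition $(a+b)^2\leq(1+O(\varepsilon))a^2+O(1/\varepsilon)b^2$ with $a=\widehat{f(\nu_x)}'-f(\nu_x)$ and $b=\widehat{f(\nu_x)}-\widehat{f(\nu_x)}'$, using $\E[a^2]=\Var[\widehat{f(\nu_x)}']$ and $\E[b^2]=O(\varepsilon^2)(\Var[\widehat{f(\nu_x)}']+f(\nu_x)^2)+O(\varepsilon)f(\nu_x)^2$.

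The main obstacle is the boundary regime, where the inclusion indicators of the one-pass and ideal samples disagree and the clean $(1\pm O(\varepsilon))$ multiplicative comparison breaks down. The key step is showing that the $\Exp[1]$ mass on the $O(\varepsilon)$-width strip around the critical value $(\nu_x/\nu^*_{(k+1)})^p$ is only $O(\varepsilon)$ times the unconditional inclusion probability of $x$, thereby trading a small probability of a potentially large discrepancy for an additive $O(\varepsilon)f(\nu_x)^2$ term in MSE rather than a multiplicative variance blow-up.
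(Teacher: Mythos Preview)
Your approach is essentially the paper's: both condition on the rHH guarantee \eqref{usePsi:eq}, fix the randomness $r_z$ for $z\neq x$, use the Lipschitz hypothesis on $f$ for the numerator, and use the relative stability of $1-e^{-z}$ under relative perturbations of $z$ for the denominator. The paper packages the comparison as a direct conditional MSE formula $p'_x(f(\nu'_x)/p''_x - f(\nu_x))^2 + (1-p'_x)f(\nu_x)^2$ versus $(1/p_x-1)f(\nu_x)^2$, whereas you organize it as a coupling plus an $(a+b)^2$ decomposition with an explicit boundary-band analysis; these are two presentations of the same argument, and yours is arguably more transparent about where the additive $O(\varepsilon)f(\nu_x)^2$ term originates.

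One slip to fix: in the boundary band you assert that ``both estimators have magnitude $O(f(\nu_x))$''. That is only true when the ideal inclusion probability $p_x=1-\exp(-(\nu_x/\nu^*_{(k+1)})^p)$ is bounded away from $0$. In general, at the boundary the estimator has magnitude $\Theta(f(\nu_x)/p_x)$ (it is an inverse-probability estimate). Your own bound on the band probability, $O(\varepsilon)p_x$ (which is correct, since the band in $r_x$ has width $O(\varepsilon z)$ at density $e^{-z}$ and $z e^{-z}\le 1-e^{-z}=p_x$), then gives a boundary contribution of
\[
O(\varepsilon)\,p_x\cdot \frac{f(\nu_x)^2}{p_x^2}
\;=\;O(\varepsilon)\,\frac{f(\nu_x)^2}{p_x}
\;=\;O(\varepsilon)\bigl(\Var[\widehat{f(\nu_x)}']+f(\nu_x)^2\bigr),
\]
rather than just $O(\varepsilon)f(\nu_x)^2$. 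This corrected term is still absorbed by the target bound $(1+O(\varepsilon))\Var[\widehat{f(\nu_x)}']+O(\varepsilon)f(\nu_x)^2$, so your overall strategy and conclusion are intact; only the intermediate accounting for $\E[b^2]$ needs the extra $O(\varepsilon)\Var$ term.
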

Note that the assumption on $f$ holds for $f(w)=w^p$ with $c=(1.5)^p$.  Also note that the 
bias bound implies a respective contribution to the relative error of $O(\varepsilon)$ on all sum estimates.

\ignore{
********* finish below  also get a better bound for countsketch ******

The claim of the theorem allows for bias in the rHH estimates that is of the order of the error, which can be the case with the \texttt{Counters} rHH sketch. \textttt{CountSketch} estimates are unbiased and respective sample estimates have a lower bias and we can expect a better behavior.

The rHH approximation shifts the threshold $\tau$ by at most $\varepsilon \tau$
The estimate for each key are considered with respect to a fixed randomization of ot
With \textt{Counters} rHH sketches, the bias is as large

  We relate the error to that obtained with the same randomization $r$ on all other keys.  when the threshold and randomization of other keys set according  
There is error from using $f(\nu'_x)$ instead of $f(\nu_x)$.  For $f(w)=w^p$  and $p\leq2$ we retain good relative error.
with some rHH sketches, $\nu'_x$ are biased estimates of $\nu_x$ and therefore our per-key estimate can be biased.

In a practical optimization we use more of the sampled keys as long as long as $\widehat{\bnu^*}$ is large enough compared to the error.

  ******* revise below ***********

  We declare ``fail'' if the estimates are  ``low'' (lower than $\approx (1-\epsilon)\frac{\psi}{k}\|\tail_{k}(\boldsymbol{\nu}^*)\|_q^q$ (we use estimate for the tail).

 }
 
\section{One-pass Total Variation Distance Guarantee}
We provide another 1-pass method, based on the combined use of rHH and known WR perfect $\ell_p$ sampling sketches~\cite{JayaramW:Focs2018} to select a $k$-tuple with a polynomially small total variation (TV) distance from the $k$-tuple distribution of a perfect $p$-ppswor.  The method uses $O(k)$ (for variation distance $2^{-\Theta(k)}$, and $O(k \log n)$ for variation distance $1/n^C$ for an arbitrarily large constant $C > 0$) perfect samplers (each providing a single WR sample) and an rHH sketch.  The perfect samplers are processed in sequence with prior selections ``subtracted" from the linear sketch (using approximate frequencies provided by the rHH sketch) to uncover fresh samples.  As with WORp, exact frequencies of sampled keys can be obtained in a second pass or approximated using larger sketches in a single pass. Details are provided in Appendix~\ref{onepassTD:sec}.
\begin{theorem}\label{thm:perfect}
There is a 1-pass method via composable sketches of size $O(k \polylog(n))$ that returns a $k$-tuple of keys such that the total variation distance from the $k$-tuples produced by a perfect $p$-ppswor sample is at most $1/n^C$ for an arbitrarily large constant $C > 0$. 
The method applies to keys from a domain $[n]$, and signed values with magnitudes and intermediate frequencies that are polynomial in $n$.
\end{theorem}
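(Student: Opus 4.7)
}
The plan is to simulate $p$-ppswor sequentially using $K = k + O(\log n)$ independent copies of the Jayaram--Woodruff perfect $\ell_p$ sampler together with a single $\ell_q$ rHH sketch (\texttt{CountSketch} for $q=\max\{p,2\}$), all of which are linear in the frequency vector $\bnu$ and hence composable. The perfect samplers each use $\polylog(n)$ words under the assumed polynomial bounds on magnitudes and intermediate frequencies, so the total sketch size is $O(k\polylog(n))$. The rHH sketch is instantiated with accuracy parameter $\psi = n^{-C'}$ for a large enough constant $C'$ (depending on the target $C$), which still fits into $k\polylog(n)$ words.

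The sampling procedure is as follows. At step $i=1,\dots,k$, we have sampled keys $x_1,\dots,x_{i-1}$. Using linearity of the $i$-th sampler's sketch, we form the sketch of the residual vector $\bnu - \sum_{j<i}\widetilde\nu_{x_j}\,e_{x_j}$, where $\widetilde\nu_{x_j}$ is the rHH estimate of $\nu_{x_j}$, and run the $i$-th perfect sampler on this residual sketch to obtain $x_i$. If the returned key is a duplicate of some $x_j$ with $j<i$, we discard it and use a fresh sampler from the pool of $K$; chaining at most $O(\log n)$ fresh draws per step succeeds except with probability $1/\poly(n)$ by standard concentration, since the perfect $\ell_p$ sampler on the residual gives a duplicate with probability bounded by the appropriate approximate collision probability.

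The heart of the argument is a hybrid coupling that bounds the TV distance from a genuine $p$-ppswor sequence. Define the ideal coupled process that at step $i$ uses the exact residual $\bnu - \sum_{j<i}\nu_{x_j}e_{x_j}$ together with the same internal randomness of the $i$-th perfect sampler; by perfectness of the Jayaram--Woodruff sampler this is distributed exactly as $p$-ppswor. We compare our procedure to this ideal process step by step. Conditioning on the (high-probability) event that all rHH estimates satisfy $|\widetilde\nu_x - \nu_x|\le n^{-C'}\|\tail_k(\bnu)\|_q^{q/q}$ in the relevant sense of \eqref{rHHerror:eq}, the perturbation between the approximate and exact residuals has magnitude $1/\poly(n)$. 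The main step is a \emph{sensitivity lemma} for $\ell_p$ sampling: if $\mathbf{u},\mathbf{u}'\in\Re^n$ satisfy $\|\mathbf{u}-\mathbf{u}'\|_\infty \le \eta$ with $\|\mathbf{u}\|_p\ge n^{-O(1)}$, then the distributions proportional to $|u_x|^p$ and $|u'_x|^p$ are within TV distance $\eta\cdot\poly(n)$; this follows by direct comparison of the two probability mass functions using the polynomial lower bound on non-zero frequencies. Summing the per-step TV contributions over $k$ steps and adding the failure probabilities of the rHH sketch and of the duplicate-resampling chain, the total TV distance is at most $1/n^C$ for any desired constant $C$ by taking $C'$ large enough.

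The main obstacle is the sensitivity analysis, since the residual vector can have very small $\ell_p$ norm as we progress (when heavy keys have been removed) and the perfect sampler's output is sensitive exactly in that regime. This is what forces both the polynomial lower bound on intermediate frequencies (ensuring that $\|\mathbf{u}\|_p$ stays lower-bounded at every step so the mass-function ratio comparison is meaningful) and the choice of $C'$ large relative to $C$ and to the polynomial bound on the dynamic range. Once sensitivity is controlled, the rest of the argument is a routine union bound over the $k$ steps and the rHH failure event, and the stated $O(k\polylog n)$ sketch size follows from combining the $K$ perfect samplers with the one \texttt{CountSketch} at accuracy $n^{-C'}$.
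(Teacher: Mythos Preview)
Your proposal has a genuine and fatal gap in the sketch-size accounting. You set the rHH (\texttt{CountSketch}) accuracy parameter to $\psi=n^{-C'}$ and assert this ``still fits into $k\polylog(n)$ words.'' It does not: from Table~\ref{rHH:table} the \texttt{CountSketch} size is $O\!\left(\tfrac{k}{\psi}\log\tfrac{n}{\delta}\right)$, so $\psi=n^{-C'}$ forces size $O(k\,n^{C'}\log n)$, which is polynomial in $n$. Your sensitivity-lemma route fundamentally needs $\eta=1/\poly(n)$ accuracy on the estimated frequencies to push the TV distance down to $1/n^C$, and there is no way to obtain that from a polylog-size heavy-hitters sketch on an arbitrary vector.

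The paper sidesteps this entirely by exploiting a structural fact you do not use: when you subtract the rHH estimates of the already-sampled coordinates from the linear sketch, the \emph{unsampled} coordinates are left exactly intact. Hence the perfect $\ell_p$ sampler, run on the modified sketch and \emph{conditioned on returning a fresh index}, samples that index with probability exactly proportional to $|x_j|^p$ over fresh $j$ (up to the sampler's own $1/n^C$ TV error). The rHH sketch is therefore only needed at constant accuracy---enough that the residual $p$-mass on already-sampled coordinates is at most half the total, which bounds the duplicate probability by $1/2$. One then runs a pool of $O(k\log n)$ independent perfect samplers and argues via a sum of i.i.d.\ geometric random variables that $k$ distinct indices are collected with probability $1-1/n^C$. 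No sensitivity lemma, no $n^{-C'}$ accuracy, and the total size stays $O(k\polylog n)$. Your hybrid/coupling framework and the use of linearity to subtract previously sampled keys are the right scaffolding; the missing idea is that conditioning on ``fresh'' makes the approximate subtraction harmless for the distribution, so coarse rHH accuracy suffices.
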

We also show in Appendix~\ref{onepassTD:sec} that our sketches in Theorem \ref{thm:perfect} use $O(k \cdot \log^2 n (\log \log n)^2)$ bits of memory for $0 < p < 2$, and we prove a matching lower bound on the memory required of any algorithm achieving this guarantee, up to a $(\log \log n)^2$ factor. For $p = 2$ we also show they are of optimal size, up to an $O(\log n)$ factor. 

\section{Experiments}

We implemented 2-pass and 1-pass WORp in Python using \texttt{CountSketch} as our rHH sketch.  
Figure~\ref{perfectVworp:plot} reports estimates of the rank-frequency distribution obtained with $1$-pass and $2$-pass WORp and perfect WOR ($p$-ppswor) and perfect WR samples (shown for reference).  For best comparison, all WOR methods use the same randomization of the $p$-ppswor transform. 
Table~\ref{tab:errors} reports normalized root averaged squared errors (NRMSE) on example statistics. 
As expected, 2-pass WORp and perfect $2$-ppswor are similar and WR $\ell_2$ samples are less accurate with larger skew or on the tail. Note that current state of the art sketching methods are not more efficient for WR sampling than for WOR sampling, and estimation quality with perfect methods is only shown for reference.  We can also see that 1-pass WORp performs well, although it requires more accuracy (lager sketch size) since it works with estimated weights (reported results are with fixed \texttt{CountSketch} size of $k\times 31$ for all methods).

\begin{figure}[ht]
\centering
\centerline{
\includegraphics[width=0.3\textwidth]{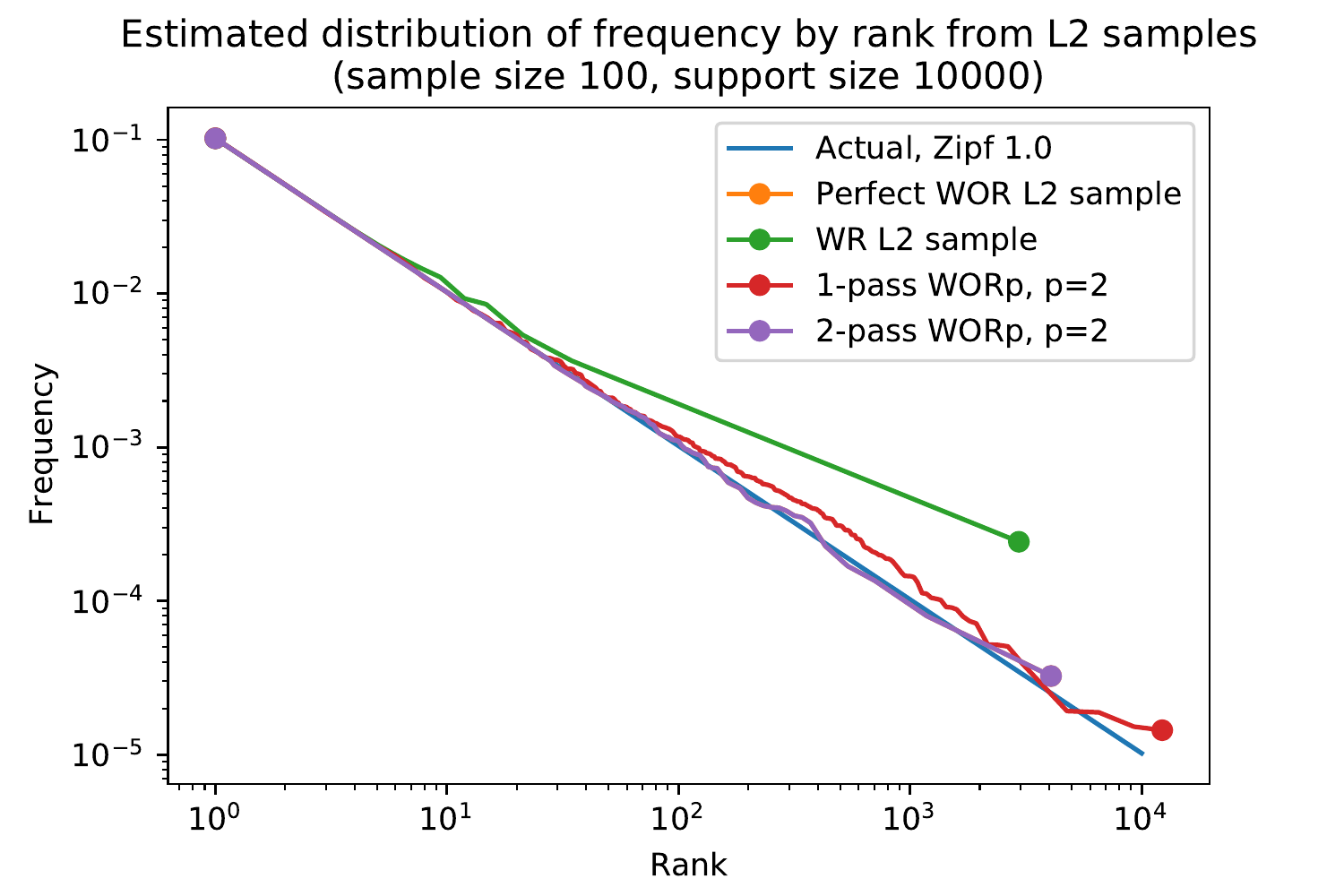}
\includegraphics[width=0.3\textwidth]{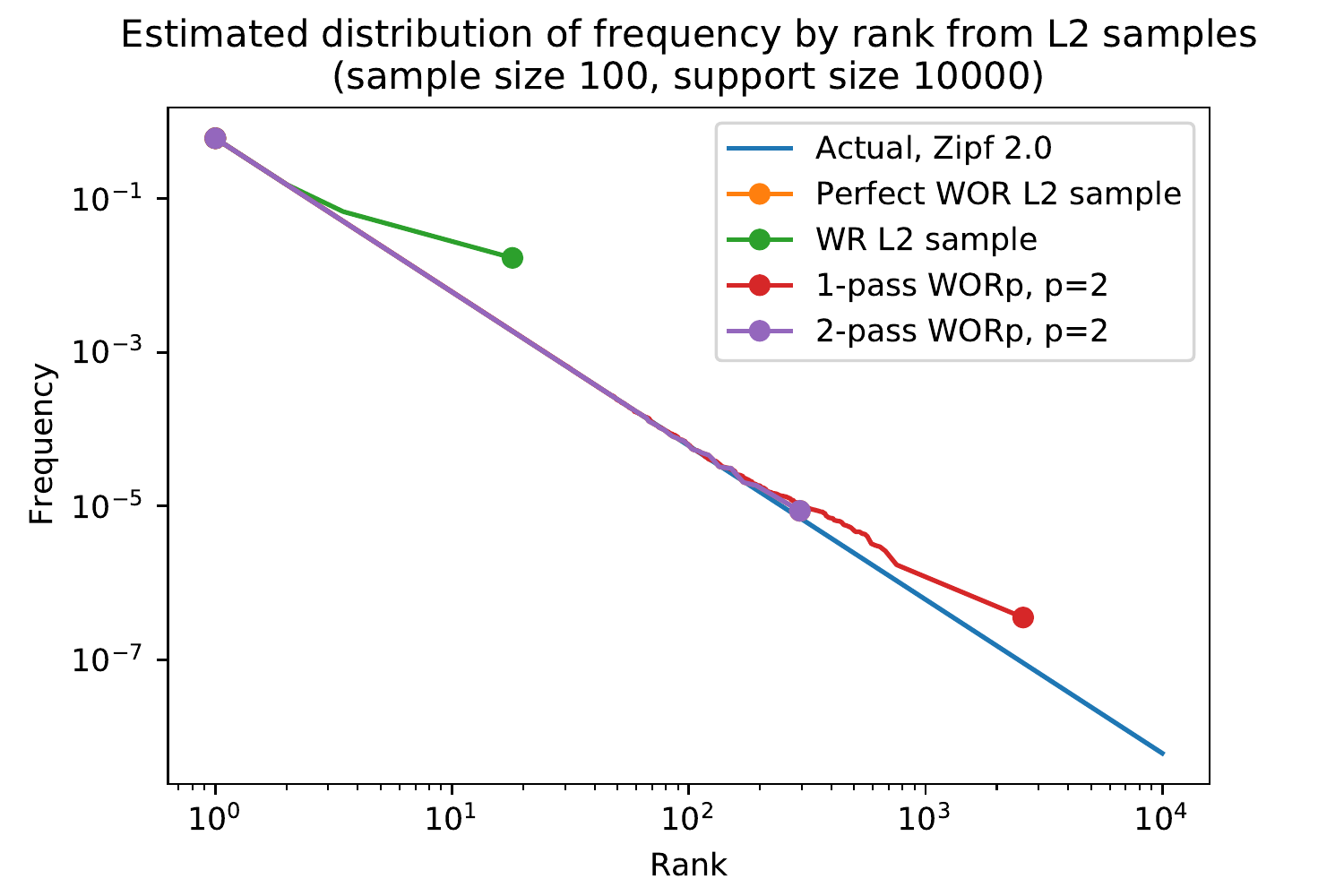}
\includegraphics[width=0.3\textwidth]{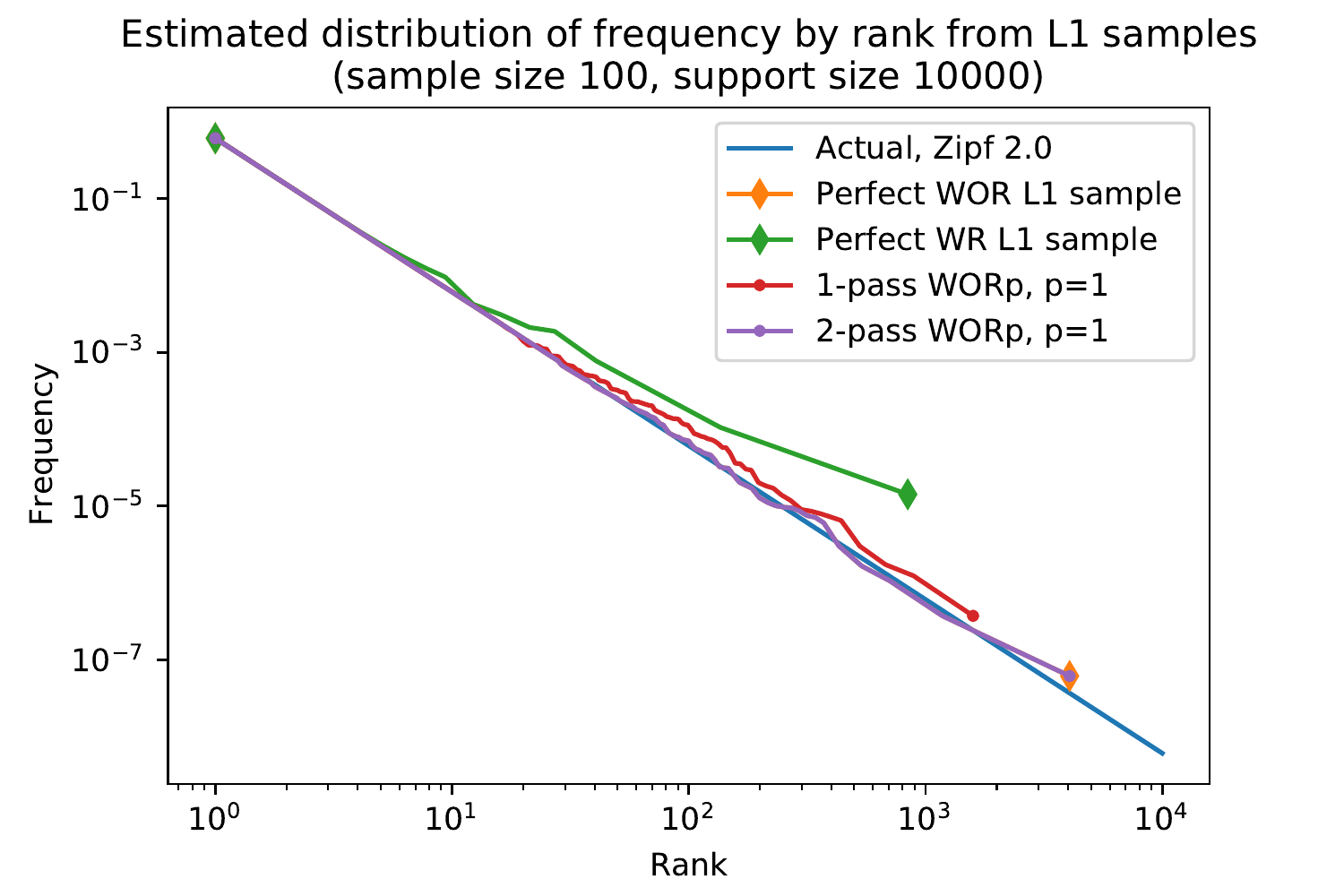}
}
\caption{Estimates of the rank-frequency distribution of $\Zipf[1]$ and $\Zipf[2]$.  Using WORp 1-pass, WORp 2-pass with \texttt{CountSketch} (matrix $k\times 31$), perfect WOR, and perfect WR.
Estimates from a (representative) single sample of size $k=100$. Left and Center: $\ell_2$ sampling. Right: $\ell_1$ sampling.
}
\label{perfectVworp:plot}
\end{figure}

\begin{table}[]
    \centering
    \small
    \begin{tabular}{r rrr||c | c c }
   $\ell_p$ &  $\alpha$ & $p'$ & perfect WR & perfect WOR & 1-pass WORp & 2-pass WORp \\
\hline        
$\ell_2$ & $\Zipf[2]$ & $\nu^3$ 
 & 1.16e-04 & 2.09e-11 & 1.06e-03 & 2.08e-11\\
$\ell_2$ & $\Zipf[2]$ & $\nu^2$  
& 7.96e-05 & 1.26e-07 & 1.14e-02 & 1.25e-07\\
\hline
$\ell_1$ & $\Zipf[2]$ & $\nu$ 
& 9.51e-03 & 1.60e-03 & 2.79e-02 & 1.60e-03 \\

$\ell_1$ & $\Zipf[1]$ & $\nu^3$ 
& 3.59e-01 & 5.73e-03 & 5.14e-03 & 5.72e-03 \\
$\ell_1$ & $\Zipf[2]$ & $\nu^3$ 
& 3.45e-04 & 7.34e-10 & 5.11e-05 & 7.38e-10\\
    \end{tabular}
    \caption{NRMSE on estimates of frequency moments on statistics of the form $\|\bnu\|_{p'}^{p'}$ from $\ell_p$ samples ($p=1,2$). $\Zipf[\alpha]$ distributions with support size $n=10^4$, $k=100$ samples, averaged over $100$ runs. \texttt{CountSketch} of size $k \times 31$}
    \label{tab:errors}
    \vspace{-0.3in}
\end{table}

\subsection*{Conclusion}
We present novel composable sketches for without-replacement (WOR)  $\ell_p$ sampling, based on
``reducing" the sampling problem to a heavy hitters (HH) problem. The reduction, which is simple and practical,  allows us to use existing implementations of HH sketches to perform WOR sampling.  Moreover, streaming HH sketches that support time decay (for example, sliding windows~\cite{Ben-BasatWFK:Infocom2016}) provide  a respective time-decay variant of sampling.  We present two approaches, WORp, based on a bottom-$k$ transform, and another technique based on ``perfect'' with-replacement sampling sketches, which provide 1-pass WOR samples with
negligible variation distance to a true sample.  Our methods open the door for a wide range of future applications.  
Our WORp schemes produce bottom-$k$ samples (aka bottom-$k$ sketches) with respect to a specified randomization $r_x$ over the support
(with 1-pass WORp we obtain approximate bottom-$k$ samples).  Samples of different datasets or different $p$ values or different time-decay functions that are generated with the same $r_x$ are {\em coordinated}~\cite{BrEaJo:1972,Saavedra:1995,Ohlsson:2000,ECohen6f,CoWaSu, BRODER:sequences97}.  Coordination is a desirable and powerful property:
Samples are locally sensitivity (LSH) and change little with small changes in the dataset~\cite{BrEaJo:1972,Saavedra:1995,IndykMotwani:stoc98,CohenCDL:JAlg2016,multiobjective:2015}. This LSH property allows for a compact representation of multiple samples,  efficient updating, and sketch-based similarity searches. Moreover, coordinated samples (sketches) facilitate powerful estimators for multi-set statistics and similarity measures such as weighted Jaccard similarity, min or max sums, and one-sided distance norms~\cite{ECohen6f,BRODER:sequences97,CK:sigmetrics09,multiw:VLDB2009,CK:pods11,CKsharedseed:2012,sdiff:KDD2014}.

\paragraph{Acknowledgments:} D. Woodruff would like to thank partial support from the Office of Naval Research (ONR) grant N00014-18-1-2562, and the National Science Foundation (NSF) under Grant No. CCF-1815840.   

\onlyinproc{
\newpage
\section{Broader Impact}
Broader Impact Discussion is not applicable.  We presented a method for WOR sampling that has broad applications in ML.  But this is a technical paper with no particular societal implications.  
}
\bibliographystyle{plain}  
\bibliography{cycle}

\appendix

 \section{Properties of rHH sketches} \label{rHHmore:sec}
 
\paragraph{Sketches for $\ell_1$ heavy hitters on datasets with
  positive values:}  These include the deterministic counter-based Misra Gries~\cite{MisraGries:1982,DBLP:journals/tods/AgarwalCHPWY13}, Lossy Counting~\cite{MM:vldb2002}, and Space Saving~\cite{spacesaving:ICDT2005} and the randomized Count-Min Sketch~\cite{CormodeMuthu:2005}.  A sketch of size $O(\varepsilon^{-1})$ provides frequency estimates with absolute error at most $\varepsilon \|\boldsymbol{\nu}\|_1$. 
Berinde et al. \cite{BerindeCIS:PODS2009} provide a counter-based sketch of size
$O(k/\psi)$ that provides absolute error at most $\frac{\psi}{k}\|\tail_{k}(\boldsymbol{\nu})\|_1$.

\paragraph{Sketches for $\ell_2$ heavy hitters on datasets with signed values:}
Pioneered by \texttt{CountSketch}~\cite{ccf:icalp2002}:
A sketch of size $O(\varepsilon^{-1}\log \frac{n}{\delta})$  provides
with confidence $1-\delta$ estimates with error bound $\varepsilon
\|\boldsymbol{\nu}\|_2^2$ for squared frequencies.
For rHH, a \texttt{CountSketch} of size $O(\frac{k}{\psi} \log \frac{n}{\delta})$ provides estimates
for all squared frequencies with absolute error at most
$\frac{\psi}{k} \|\tail_{k}(\boldsymbol{\nu})\|_2^2$.  These bounds were
further refined in
\cite{Jowhari:Saglam:Tardos:11}  for $\ell_p$
rHH.  The dependence on $\log n$ was replaced by $1/\psi$ in
\cite{BravermanCINWW:PODS2017} for insertion only streams. Unlike the case for counter-based sketches, the estimates produced by \texttt{CountSketch} are unbiased, a helpful property for estimation tasks. 

\paragraph*{Obtaining the rHH keys}
Keys can be arbitrary strings (search queries, URLs, terms) or
integers from a domain $[n]$ (parameters in an ML model).  
Keys in the form of strings can be handled by hashing them to a domain $[n]$ but we discuss applications that require the sketch to return rHH keys in their string form.
Counter-based sketches store explicitly
$O(k/\psi)$ keys.  The stored keys can be arbitrary strings.  The estimates are positive for stored keys and $0$ for other keys. The rHH keys are contained in the stored keys.
The randomized rHH sketches (\texttt{CountSketch} and \texttt{CountMin}) are designed for keys that are integers in $[n]$.  The bare sketch does not explicitly store keys.  The rHH keys can be recovered by enumerating over $[n]$ and retaining the keys with largest estimates.  Alternatively, when streaming (performing one sequential pass over elements) we can maintain an auxiliary structure that holds key strings with current top-$k$ estimates~\cite{ccf:icalp2002}. 

With general composable sketches, key strings can be handled using an auxiliary structure  that increases the sketch size by a factor linear in string length.  This is inefficient compared with sketches that natively store the string.  Alternatively, a two-pass method can be used with the first pass computing an rHH sketch for a hashed numeric representation and a second pass is used to obtain the key strings of hashed representations with high estimates.

\ignore{
****** revise below ********
and respective estimated frequencies.  Moreover, in applications the keys are often strings from a large domain (search queries, URLs, terms).  They can be hashed to a numeric domain of size $O(n)$ but eventually we still need to return the string representation.
Counter-based sketches maintain .  Estimates are $0$ for keys not stored in the sketch and positive otherwise and the stored keys include the rHH keys. 
Random-projection sketches (Count-sketch and Count-Min) are designed for keys that are integers $[n]$. The bare sketch provides approximate frequencies for numerically represented query keys.  But the sketch does not explicitly track keys. When streaming, we can use
an auxliary structure that tracks and stores key strings with numeric hashes with largest estimated values. In a general distributed aggregation setting this method does not apply.  One solution is to use a second pass to collects key strings with highest estimates and their exact frequencies. Another solution is to enumerate over the numeric domain and identify (numeric) keys with highest estimates (but this does not allow for recovery of key strings).   A third approach is to use an auxiliary structures that uses estimates of the bit representation of the keys. This multiplies sketch size by the key string length.
}

\paragraph*{Recovering (approximate) frequencies of rHH keys}
For our application, we would need to have approximate or exact frequencies of rHH keys.  
 The estimates provided by a $(k,\psi)$ rHH sketch provide absolute error (statistical) guarantees (see Table~\ref{rHH:table}).  One approach is to recover exact frequencies in a second pass.  We can also obtain more accurate estimates (of relative error at most $\epsilon$) by using a $(k,\epsilon\psi)$ rHH sketch.

\paragraph*{Testing for failure}
Recall that the dataset may not have $(k,\psi)$ rHH.   We can test
if one of the $k$ largest estimated frequencies to the $p$th power  is below the specified error
   bound of $\geq \frac{\psi}{k}
   \|\tail_{k}(\boldsymbol{\nu})\|_p^p$.  If so,  we declare    ``failure.''

\section{Overview of the proof of Theorem~\ref{rHHprop:thm}} \label{Oproof:sec}
For a vector $\boldsymbol{w}\in\Re^n$, permutation $\pi\in S^n$,  and $p>0$, 
let the random variable $\boldsymbol{w^*}\sim\text{$p$-ppswor}[\boldsymbol{w}] \mid\pi$ be a $p$-ppswor transform \eqref{bottomkptransform:eq}  of $\boldsymbol{w}$ conditioned on the event $\order(\boldsymbol{w^*})=\pi$.
  For $q>p$ and $k>1$, we define the following distribution:
 \begin{equation} \label{ratio:eq}
 F_{\boldsymbol{w},p,q,k}\mid\pi := \frac{\|\tail_{k}(\boldsymbol{w^*})\|_q^q}{(w^*_{(k)})^q}\ .
 \end{equation}
Note that for any $\boldsymbol{w}\in \Re^n$ and $\pi\in S^n$,

\begin{equation}\label{F2psi:eq}
\Pr_{\boldsymbol{w}^*\sim\text{$p$-ppswor}[\boldsymbol{w}] \mid \order(\boldsymbol{w}^*) =\pi} \left[
          k \frac{|w^*_{(k)}|^q}{\|\tail_{k}(\boldsymbol{w}^*)\|_q^q} \leq  \psi\right] =
          \Pr_{z\sim F_{\boldsymbol{w},p,q,k}\mid\pi }\left[z \leq \frac{k}{\psi}\right] 
\end{equation}
Therefore tail bounds on $F$ that hold for any
 $\boldsymbol{w}\in \Re^n$ and $\pi\in S^n$ can be used to establish the claim.
 
 \ignore{
 
  We study the
  distribution of the ratio
$\frac{\|\tail_{k}(\boldsymbol{w^*})\|_q^q}{(w^*_{(k)})^q}$.

sorted in non-increasing order.  
For $k\geq 1$ and $q>p$, 
  We are interested in tail bounds on
 $F_{\boldsymbol{w},p,q,k}$  because for any $\psi>0$, 
 \begin{equation}
     \Pr_{z\sim F_{\boldsymbol{w},p,q,k}}\left[z \leq \frac{k}{\psi}\right] = \Pr\left[ \textrm{$\boldsymbol{w}^*$ has $\ell_p$ $(k,\psi)$ rHH}\right]\enspace .   
 \end{equation}
 Therefore, we only need to consider tail bounds on the CDF of $F_{\boldsymbol{w},p,q,k}$.
 
We will in fact consider distributions that are conditioned on the permutation of keys according to decreasing order of output frequencies.  Tail bounds on these conditional distributions will provide high confidence bounds for rHH even when conditioned on the set of sampled keys.
For a permutation $\pi=(x_1,\ldots,x_n)$ over keys, we define the distribution  $F_{\boldsymbol{w},p,q,k}|\pi$ to be the ratio distribution $F_{\boldsymbol{w},p,q,k}$ conditioned on the event that $\boldsymbol{w^*}$ is such that the order of keys in decreasing order of output frequencies corresponds to the permutation $\pi$. Equivalently, 
we consider the distribution restricted to $\{\boldsymbol{r} \sim \mathcal{D}^n\}$ such that for all $i$, $w^*_{x_i} := w_{x_i}/r_{x_i}^{1/p}$ is the $i$th largest entry in $\boldsymbol{w^*}$. 
\begin{equation} 
     \Pr_{z\sim F_{\boldsymbol{w},p,q,k}|\pi}\left[z \leq \frac{k}{\psi}\right] = \Pr\left[ \textrm{$\boldsymbol{w}^*$ has $\ell_p$ $(k,\psi)$ rHH} \mid \textrm{$\boldsymbol{w}^*$ has ordering $\pi$ }\right]\enspace .   
 \end{equation}

Theorem~\ref{rHHprop:thm} thus follows as an immediate consequence of the following Theorem and corollary for ppswor sampling (when $r_x\sim
\Exp[1])$). We expect similar results to hold also for priority
sampling.
 \begin{theorem} \label{ratiobound:thm}
There is a constant $C$, so that for any $\boldsymbol{w}$, $k$, $p$, $q\geq p$, $\delta>0$ and permutation $\pi$,
\begin{align}
q\geq p \text{:} &
 \Pr_{r\sim F_{\boldsymbol{w},p,q,k}\mid\pi}\left[r \geq
                  C \ln(\frac{n}{k}) k \right] \leq  3e^{-k}
  \\
  q>p \text{:} &
 \Pr_{r\sim F_{\boldsymbol{w},p,q,k}\mid\pi}\left[r \geq C  \frac{1}{(q/p)-1}
                    k \right] \leq 3  e^{-k} 
\end{align}
\end{theorem}
 

}

We now define another distribution that does not depend on $\boldsymbol{w}$ and $\pi$:
\begin{definition} \label{Rdist:def}
For $1\leq k\leq n$ and $\rho\geq 1$ we define a distribution $R_{n,k,\rho}$ as follows. 
\[
R_{k,n,\rho} := \sum_{i=k+1}^n \frac{\left(\sum_{j=1}^{k} Z_j\right)^\rho}{\left(\sum_{j=1}^{i} Z_j\right)^\rho }\enspace ,
\]
where $Z_i\sim \Exp[1]$ $i\in [n]$ are i.i.d.
\end{definition}

 The proof of Theorem~\ref{rHHprop:thm} will follow using the following two components:
 \begin{enumerate}
     \item [(i)]
 We show (Section \ref{dom:sec}) that for any $\boldsymbol{w} \in \Re^n$ and permutation $\pi\in S^n$, 
 \[
F_{\boldsymbol{w},p,q,k}|\pi \preceq R_{k,n,\rho=(q/p)}\ ,
\]
where the relation $\preceq$ corresponds here to statistical domination of distributions. 
\item [(ii)]
 We establish (Section~\ref{proofRbound:sec}) tail bounds on $R_{k,n,\rho=(q/p)}$.
\end{enumerate}
Because of domination, the tails bounds on $R_{k,n,\rho=(q/p)}$ provide corresponding tail bound for $F_{\boldsymbol{w},p,q,k}|\pi$ for any $\boldsymbol{w}\in\Re^n$ and $\pi\in S^n$.  Together with \eqref{F2psi:eq}, we use the tail bounds to conclude the proof of Theorem~\ref{rHHprop:thm}.  

Moreover, the domination relation is tight in the sense that for some
$\boldsymbol{w}$ and $\pi$, $F_{\boldsymbol{w},p,q,k}|\pi$ is very close to 
$R_{k,n,q/p}$:
For distributions with $k$ keys with relative frequency $\epsilon$ and $\pi$ that has these keys in the first $k$ (as $\epsilon\rightarrow 0$),  or for uniform distributions with $n\gg k$, $F_{\boldsymbol{w},p,q,k}|\pi$ (as $n$ grows). 

The tail bounds (and hence the claim of Theorem~\ref{rHHprop:thm}) also hold without the condition on $\pi$.
\begin{lemma}
The tail bounds also hold for the unconditional distribution $F_{\boldsymbol{w},p,q,k}$.
\end{lemma}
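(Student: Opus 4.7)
The plan is to observe that the unconditional distribution $F_{\boldsymbol{w},p,q,k}$ is, by the law of total probability, precisely the $\pi$-indexed mixture of the conditional distributions $F_{\boldsymbol{w},p,q,k}\mid\pi$ with mixing weights $\Pr[\order(\boldsymbol{w}^*)=\pi]$. Since the tail bounds we want to transfer (those stated in Theorem~\ref{rHHprop:thm}, transcribed from the bounds on $R_{k,n,\rho}$ via domination) hold uniformly in $\pi$ with the same constants, they automatically pass to any convex combination.

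Concretely, I would fix any threshold $t$ of the form appearing in Theorem~\ref{rHHprop:thm} (namely $t = C k \ln(n/k)$ in the $\rho=1$ case, and $t = C k / \max\{\rho-1, 1/\ln(n/k)\}$ in the $\rho>1$ case) and write
\[
\Pr_{z \sim F_{\boldsymbol{w},p,q,k}}[z \geq t]
= \sum_{\pi \in S^n} \Pr[\order(\boldsymbol{w}^*)=\pi]\,\Pr_{z \sim F_{\boldsymbol{w},p,q,k}\mid\pi}[z \geq t].
\]
By the assumed conditional tail bound, each inner probability is at most $3e^{-k}$, independently of $\pi$. Pulling the uniform upper bound out of the sum and using $\sum_{\pi} \Pr[\order(\boldsymbol{w}^*)=\pi]=1$ (the events $\{\order(\boldsymbol{w}^*)=\pi\}$ partition the probability space, since with probability one the entries of $\boldsymbol{w}^*$ are distinct because the $r_x$ are continuous) gives $\Pr_{z\sim F_{\boldsymbol{w},p,q,k}}[z\geq t]\leq 3e^{-k}$, which is exactly the claimed unconditional bound.

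There is essentially no obstacle: the only subtle point is the measurability/partition step, namely that with probability one the ordering $\order(\boldsymbol{w}^*)$ is well-defined (no ties), so that conditioning on $\pi$ yields a genuine partition and the mixture identity is valid. This is immediate because $r_x\sim\Exp[1]$ is continuous and the $w_x$ are fixed, so ties occur with probability zero; entries that are zero can be placed at the end of $\pi$ without affecting $\tail_k$ and $w^*_{(k)}$. Hence the uniform-in-$\pi$ conditional bound implies the unconditional bound with the same constants, completing the proof.
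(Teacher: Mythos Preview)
Your proof is correct and follows essentially the same approach as the paper: express $F_{\boldsymbol{w},p,q,k}$ as the convex combination $\sum_\pi p_\pi\, (F_{\boldsymbol{w},p,q,k}\mid\pi)$ and use that a tail bound holding uniformly for each component holds for the mixture. Your added remark about ties having probability zero is a nice touch but not something the paper spells out.
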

\begin{proof}
The distribution $F_{\boldsymbol{w},p,q,k}$ is a convex combination of distributions $F_{\boldsymbol{w},p,q,k}|\pi$. Specifically, for each permutation $\pi$ let $p_{\pi}$ be the probability that we obtain this permutation with successive weighted sampling with replacement.  
Then
\begin{equation}\label{convexcombo:eq}
    F_{\boldsymbol{w},p,q,k} = \sum_{\pi} p_{\pi} F_{\boldsymbol{w},p,q,k}|\pi \enspace .
\end{equation}
Since tail bounds hold for each term, they hold for the combination.
\end{proof}

\subsection{Approximating \texorpdfstring{$\boldsymbol{\psi}$}{Psi} by simulations} \label{psiapprox:sec}
$\Psi_{k,n,\rho}(\delta)$ 
is the solution of the following for $\psi$:
\begin{equation}\label{psitouse:eq}
\Pr_{z \sim R_{k,n,\rho}}[z \geq k/\psi]=\delta\ .
\end{equation}   
We can approximate $\Psi_{k,n,\rho}(\delta)$ by computing i.i.d.\ $z_i \sim  R_{k,n,\rho}$, taking the $(1-\delta)$ quantile $z'$ in the empirical distribution and returning $k/z'$.

From simulations we obtain that for $\delta = 0.01$ and $\rho\in\{1,2\}$,
 $C= 2$ suffices for sample size $k\geq 10$,  $C = 1.4$ suffices for $k\geq 100$, and $C=1.1$ suffices for $k\geq 1000$.  

\section{Domination of the ratio distribution} \label{dom:sec}
  

\begin{lemma}[Domination]  \label{Rdomination:lemma}
For any permutation $\pi$, $\boldsymbol{w}$, $p$, $q\geq p$, and $k\geq 1$, the distribution $F_{\boldsymbol{w},p,q,k}|_\pi$ \eqref{ratio:eq}
 is dominated by $R_{n,k,q/p}$.   That is, for all $z\geq 0$,
 \begin{equation}
     \Pr_{z\sim F_{\boldsymbol{w},p,q,k}|\pi}\left[z \leq \frac{k}{\psi}\right] \geq
               \Pr_{z\sim R_{n,k,q/p}}\left[z \leq \frac{k}{\psi}\right]
 \end{equation}
 \end{lemma}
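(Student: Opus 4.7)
The plan is to reduce the conditional distribution $F_{\boldsymbol{w},p,q,k}\mid\pi$ to an explicit sum of ratios of partial sums of i.i.d.\ exponentials, and then dominate it term-by-term against $R_{n,k,\rho}$ via a pointwise coupling. First I would relabel so that $\pi$ is the identity, i.e.\ write the conditioning as $w^*_{x_1}>w^*_{x_2}>\cdots>w^*_{x_n}$ for a fixed sequence $(x_1,\ldots,x_n)$. Substituting $u_{x}=r_x/w_x^p$, each $u_{x_i}$ is an independent $\Exp[w_{x_i}^p]$, and the transformed weight $w^*_{x_i}=u_{x_i}^{-1/p}$ is decreasing in $u_{x_i}$, so the conditioning is equivalent to requiring $u_{x_1}<u_{x_2}<\cdots<u_{x_n}$ in the exponential race.

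Next I would invoke the standard memoryless ``race'' identity: conditioned on the arrival order $(x_1,\ldots,x_n)$ of independent exponentials with rates $\lambda_i=w_{x_i}^p$, the successive gaps $u_{(i)}-u_{(i-1)}$ are independent with $\Exp[W_i]$, where $W_i:=\sum_{j\geq i}w_{x_j}^p$. Writing the gaps as $Z_i/W_i$ for i.i.d.\ $Z_i\sim\Exp[1]$ yields the clean representation
\[
u_{(i)}=\sum_{j=1}^{i}\frac{Z_j}{W_j}.
\]
Since $(w^*_{(i)})^q=u_{(i)}^{-q/p}=u_{(i)}^{-\rho}$, the ratio defining $F_{\boldsymbol{w},p,q,k}\mid\pi$ becomes
\[
\sum_{i=k+1}^{n}\left(\frac{u_{(k)}}{u_{(i)}}\right)^{\rho}
=\sum_{i=k+1}^{n}\left(\frac{\sum_{j=1}^{k} Z_j/W_j}{\sum_{j=1}^{i} Z_j/W_j}\right)^{\rho}.
\]

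The key structural fact is that $W_j$ is nonincreasing in $j$, so $a_j:=1/W_j$ is nondecreasing. Using the same $Z_j$'s, I would couple the two objects: since $a_j\leq a_k$ for $j\leq k$ and $a_j\geq a_k$ for $j\geq k+1$, we get
\[
\frac{\sum_{j=k+1}^{i} Z_j/W_j}{\sum_{j=1}^{k} Z_j/W_j}\;\geq\;\frac{a_k\sum_{j=k+1}^{i} Z_j}{a_k\sum_{j=1}^{k} Z_j}=\frac{\sum_{j=k+1}^{i} Z_j}{\sum_{j=1}^{k} Z_j},
\]
which rearranges to $u_{(k)}/u_{(i)}\leq \sum_{j=1}^{k}Z_j\big/\sum_{j=1}^{i}Z_j$ for every $i>k$ on the same sample path. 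Raising each term to the $\rho$-th power (monotone on $[0,1]$) and summing over $i$ then gives a pointwise inequality between the ratio $F_{\boldsymbol{w},p,q,k}\mid\pi$ and $R_{n,k,\rho}$ under this coupling, which is exactly stochastic domination.

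The main thing to get right is the exponential race representation together with the direction of the inequality: one must be careful that the rescaling factor $a_k$ cancels simultaneously in numerator and denominator, which is what makes the coupling work uniformly in $i$. Beyond that, the argument is just a monotone term-by-term comparison, so I expect no further obstacles.
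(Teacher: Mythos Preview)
Your proposal is correct and follows essentially the same approach as the paper: represent the conditional distribution via the exponential ``race'' decomposition $u_{(i)}=\sum_{j\le i} Z_j/W_j$ with $W_j=\sum_{h\ge j} w_{x_h}^p$, then exploit monotonicity of $1/W_j$ to replace all scalings by the pivot $1/W_k$ and obtain a pointwise inequality between $F_{\boldsymbol{w},p,q,k}\mid\pi$ and $R_{n,k,\rho}$ under the common-$Z_j$ coupling. Your inequality direction ($u_{(k)}/u_{(i)}\le \sum_{j\le k}Z_j/\sum_{j\le i}Z_j$, hence $F\le R$ pointwise) is the one actually required for the stated domination; the paper's displayed $\geq$ at this step appears to be a typo.
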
  
\begin{proof}  
Assume without loss of generality that $\order(\boldsymbol{w})=\pi$. Let
$\boldsymbol{w}^* \sim \text{$p$-ppswor}[\boldsymbol{w}]\mid\order(\boldsymbol{w}^*)=\pi$. Note
by definition $\boldsymbol{w}^*$  is in decreasing order of magnitude.
Define the random variable $\boldsymbol{y}  := \boldsymbol{w^*}^p$.
$\boldsymbol{y}$ are transformed weights of a ppswor sample of $\boldsymbol{w}$ conditioned on the order $\pi$.  We use use properties of the exponential distribution (see a similar use in~\cite{bottomk07:est}) to express the joint distribution of $\{y_i\}$.  We use the following set of 
independent random variables: 
\[
X_i \sim \Exp[\sum_{j=i}^{n} w^p_j]\enspace .
\]
We have:
\begin{equation} \label{sumdist:eq}
  y_i = \frac{1}{\left( \sum_{j=1}^{i} X_i\right)^{q/p}} \enspace . 
\end{equation}
To see this, recall that $y_1$ is the (inverse) of the minimum of 
exponential random variables with parameters $w_1,\ldots,w_n$ and thus is (the inverse of) exponential random variable with parameter equal to their sum.  Therefore, $y_1 = 1/X_1$. From memorylessness, the difference between the $(i+1)$-st smallest inverse and the $i$-th smallest is an exponential random variable with distribution $X_i$.  Therefore, the $i$-th smallest inverse has the claimed distribution~\eqref{sumdist:eq}.

We are now ready to express the random variable that is the ratio \eqref{ratio:eq} in terms of the independent random variables $X_i$:
\begin{equation}\label{ratioexpress:eq}
\frac{\sum_{j=k+1}^n y_j }{y_k}= \frac{\sum_{i=k+1}^n \frac{1}{\left(\sum_{j=1}^{i} X_i\right)^{q/p} }}{\frac{1}{\left(\sum_{j=1}^{k} X_j\right)^{q/p}}} =
\sum_{i=k+1}^n \frac{\left(\sum_{j=1}^{k} X_j\right)^{q/p}}{\left(\sum_{j=1}^{i} X_j\right)^{q/p} }\enspace .
\end{equation}

We rewrite this using i.i.d.\ random variables $Z_i \sim \Exp[1]$, recalling that
for any $w$, $\Exp[w]$ is the same as $\Exp[1]/w$.
Then we have $X_i = Z_i/\sum_{j=i}^{n} w^p_j$.

We next provide a simpler distribution that dominates the distribution of the ratio.
Let $W' :=  \sum_{j=k}^{n} w^p_j$ and consider the i.i.d.\ random variables $X'_i = Z_i/W'$.  
Note that $X_j \leq X'_j$ for $j\leq k$ and $X_j \geq X'_j$ for $j\geq k$.
Thus, for $i\geq k+1$,
\begin{equation}
    \frac{\sum_{j=1}^{k} X_j}{\sum_{j=1}^{i} X_j} = \frac{1}{1+\frac{\sum_{j=k+1}^{i} X_j}{\sum_{j=1}^{k} X_j}}\geq  \frac{1}{1+\frac{\sum_{j=k+1}^{i} X'_j}{\sum_{j=1}^{k} X'_j}} = \frac{\sum_{j=1}^{k} X'_j}{\sum_{j=1}^{i} X'_j}=\frac{\sum_{j=1}^{k} Z_j}{\sum_{j=1}^{i} Z_j}
\end{equation}
This holds in particular for each term in the RHS of \eqref{ratioexpress:eq}.
Therefore we obtain 
\[
\frac{\sum_{j=k+1}^n y_j }{y_k}\geq \sum_{i=k+1}^n \frac{\left(\sum_{j=1}^{k} Z_j\right)^{q/p}}{\left(\sum_{j=1}^{i} Z_j\right)^{q/p}}\enspace .
\]
\end{proof}



\section{Tail bounds on $R_{k,n,\rho}$} \label{proofRbound:sec}

We establish the following upper tail bounds on the distribution $R_{n,k,\rho}$:
 \begin{theorem} [Concentration of $R_{n,k,\rho}$] \label{Ruppertail:thm}
There is a constant $C$, such that for any $n,k,\rho$
\begin{align}
\rho=1 \text{:} &
 \Pr_{r\sim R_{n,k,\rho}}\left[r \geq
                  C k \ln(\frac{n}{k}) \right] \leq  3e^{-k}
  \\
  \rho>1 \text{:} &
 \Pr_{r\sim R_{n,k,\rho}}\left[r \geq C k \frac{1}{\rho-1}\right]  \leq 3 e^{-k}
\end{align}
\end{theorem}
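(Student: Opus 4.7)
My plan is to work directly with the definition
\[
R_{k,n,\rho} \;=\; \sum_{i=k+1}^{n}\Bigl(\frac{S_k}{S_i}\Bigr)^{\rho},\qquad S_i := \sum_{j=1}^{i} Z_j,\ \ Z_j\overset{\text{i.i.d.}}{\sim}\mathrm{Exp}(1),
\]
and to exhibit a single high-probability ``typical'' event on which a deterministic bound of the advertised form holds. The only probabilistic input I need is the standard Chernoff--Cram\'er bound for sums of exponentials: for every $m\ge 1$ and some absolute $c>0$, $\Pr[S_m\ge 2m]\le e^{-cm}$ and $\Pr[S_m\le m/2]\le e^{-cm}$, with the sharper $\Pr[|S_m-m|\ge \varepsilon m]\le 2e^{-c\varepsilon^2 m}$ available for small $\varepsilon$.

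The event I will use is
\[
G \;=\; \{S_k \le 2k\} \;\cap\; \bigcap_{i\ge 2k+1}\{S_i - S_k \ge (i-k)/2\}.
\]
Since $S_i-S_k\sim\mathrm{Gamma}(i-k,1)$ independently of $S_k$, pointwise Chernoff plus geometric summation give $\Pr[G^c]\le e^{-ck}+\sum_{j\ge k+1}e^{-cj}\le 3e^{-c'k}$, and after rescaling constants I take the failure probability to be $3e^{-k}$. Under $G$ I split the sum at $i=2k$ as $R = R_{<}+R_{\ge}$, where $R_{<}$ contains at most $k$ summands each bounded by $1$ (because $S_k\le S_i$), so $R_{<}\le k$, and $R_{\ge}$ satisfies $(S_k/S_i)^{\rho}\le (4k/(i-k))^{\rho}$, giving $R_{\ge}\le (4k)^{\rho}\sum_{j>k} j^{-\rho}$.

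For $\rho=1$ the harmonic tail $\sum_{j>k}1/j\le \ln(n/k)$ produces $R\le k+4k\ln(n/k)=O(k\ln(n/k))$, matching the statement. For $\rho>1$ the integral bound $\sum_{j>k}j^{-\rho}\le k^{1-\rho}/(\rho-1)$ gives $R\le k+4^{\rho}k/(\rho-1)$, which already has the correct $1/(\rho-1)$ scaling but carries a spurious multiplicative factor $4^{\rho}$.

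The main obstacle is precisely this factor $4^\rho$: the theorem asks for a universal constant $C$ independent of $\rho$, and the naive argument is only tight when $\rho$ is bounded. To close the gap I anticipate two refinements used together. First, replace the constant slack $1/2$ in $G$ by $1\pm\varepsilon$ and exploit the sharper concentration $\Pr[|S_m-m|\ge\varepsilon m]\le 2e^{-c\varepsilon^2 m}$, picking $\varepsilon$ small enough that $((1+\varepsilon)/(1-\varepsilon))^{\rho}$ stays bounded; this handles $\rho$ in a mild range at the cost of enlarging the cutoff $2k$ to some $J_0(\varepsilon)$ and re-balancing the trivial part $R_{<}$. Second, for larger $\rho$ I will use the pointwise monotonicity $(S_k/S_i)^{\rho}\le (S_k/S_i)^{2}$ to reduce to the $\rho=2$ instance (which gives $O(k)$ with a universal constant) and combine it with the exact moment formula $\mathsf{E}[(S_k/S_i)^{\rho}]=\prod_{\ell=0}^{\rho-1}(k+\ell)/(i+\ell)$, which, when summed over $i$, reveals the extra $1/(\rho-1)$ decay through the telescoping Beta integral. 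Making these two regimes meet with a universal constant $C$ is the most technically delicate part of the proof; the $\rho=1$ case, by contrast, is essentially the direct harmonic-sum argument above.
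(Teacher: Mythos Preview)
Your core strategy is the same as the paper's: fix a high-probability ``good'' event that upper-bounds the numerator $S_k$ and lower-bounds the denominator increments, then deterministically bound $R$ on that event by splitting the sum into a short head (trivially $\le 1$ per term) and a tail controlled by the partial-sum bounds. The paper groups the tail into dyadic stretches of lengths $2^h k$ rather than controlling each $S_i-S_k$ individually, but this is a cosmetic difference; both give a geometrically summable union bound.

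You are over-engineering the $4^\rho$ issue. The simple fix, and effectively what the paper does, is to move the head/tail split from $i=2k$ to $i=5k$: the head then contributes at most $4k$, and for $i>5k$ your deterministic ratio bound is $S_k/S_i\le 4k/(i-k)<1$, so
\[
R_{\ge}\ \le\ (4k)^\rho\sum_{j>4k}j^{-\rho}\ \le\ (4k)^\rho\cdot\frac{(4k)^{1-\rho}}{\rho-1}\ =\ \frac{4k}{\rho-1},
\]
with no $\rho$-dependent constant. This is exactly the mechanism by which the paper's dyadic argument absorbs its own $3.2^\rho$ factor: the geometric sum starts at $h=2$, where the denominator is already $\gtrsim 4k$, so the leading term is $4\cdot(3.2/4)^\rho\le 4$. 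Your proposed refinements (sharpen the slack to $1\pm\varepsilon$, reduce to $\rho=2$ by monotonicity, invoke Beta-moment telescoping) are unnecessary; in particular the moment route would give an expectation bound, not the $3e^{-k}$ tail.

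Two minor notes. First, your constants $2$ and $1/2$ for the good event only yield $e^{-ck}$ with $c\approx 0.19$; the paper chooses $3.2$ and $0.15$ precisely so that the Erlang tail bounds hit $e^{-k}$. You cannot ``rescale'' after the fact---you have to pick the right thresholds in the event. Second, observe that both your bound and the paper's actually deliver $R\le Ck\bigl(1+\tfrac{1}{\rho-1}\bigr)$ on the good event, not $Ck/(\rho-1)$ uniformly in $\rho$; this is $O(k/(\rho-1))$ for $\rho\le 2$ (the only range used downstream) but not for large $\rho$. So the point you flagged as ``most technically delicate'' is in fact not addressed by the paper either, and is immaterial for the application.
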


 We start with a ``back of the envelope'' calculation to provide intuition: replace the random variables $Z_i$ in $R_{n,k,\rho}$ 
(see Definition~\ref{Rdist:def}) by their expectation $\E[Z_i]=1$ to obtain 
\[S_{n,k,\rho}:= \sum_{i=k+1}^n \frac{k^\rho}{i^\rho} \enspace . \]
For $\rho=1$, $S_{n,k,\rho}\leq k(H_n-H_k) \approx k \ln(n/k)$.  For $\rho>1$ we have $S_{n,k,\rho}\approx \frac{k}{\rho-1}$.  
We will see that we can expect the sums not to deviate too far from this value.

The sum of $\ell$ i.i.d.\ $\Exp[1]$ random variables generates an Erlang distribution $\Erlang[\ell,1]$ (rate parameter $1$).
The expectation is $\E_{r\sim \Erlang[\ell,1]} = \ell$  and variance is $\Var_{r\sim \Erlang[\ell,1]}[r] = \ell$.
We will use the following Erlang tail bounds~\cite{ErlangTails:2017}:
  \begin{lemma} \label{Erlangtails:lemma}
For $X\sim \Erlang[\ell,1]$
\begin{eqnarray*}
\varepsilon \geq 1 \text{ : }& \Pr[x \geq \varepsilon\ell] \leq \frac{1}{\varepsilon} e^{-\ell(\varepsilon-1-\ln \varepsilon)}\leq e^{1-\varepsilon} \\
\varepsilon \leq 1 \text{ : } & \Pr[x \leq \varepsilon\ell] \leq e^{-\ell(\varepsilon-1-\ln\varepsilon)} 
\end{eqnarray*} 
\end{lemma}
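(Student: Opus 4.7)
The plan is to derive both tails via the Chernoff / exponential-moment method, leveraging that $X\sim\Erlang[\ell,1]$ has MGF $\E[e^{tX}] = (1-t)^{-\ell}$ for $t<1$. The lower tail is textbook; the upper tail needs a mild sharpening, which I would get from the classical Poisson-process representation of the Erlang distribution.

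For the lower tail ($\varepsilon\leq 1$), I apply Markov to $e^{-tX}$ with $t>0$ to obtain $\Pr[X\leq\varepsilon\ell]\leq e^{t\varepsilon\ell}/(1+t)^\ell$. The exponent $t\varepsilon\ell-\ell\ln(1+t)$ is minimized at $t=1/\varepsilon-1\geq 0$, substituting to $\varepsilon^\ell e^{\ell(1-\varepsilon)} = e^{-\ell(\varepsilon-1-\ln\varepsilon)}$, which is exactly the stated bound.

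For the upper tail ($\varepsilon\geq 1$), the analogous Chernoff bound at $t=1-1/\varepsilon$ only produces $e^{-\ell(\varepsilon-1-\ln\varepsilon)}$, losing a factor $\varepsilon$ relative to the claim. I would instead invoke the exact identity $\Pr[X\geq t]=e^{-t}\sum_{k=0}^{\ell-1} t^k/k!$, which follows from the equivalence of $\{X\geq t\}$ with $\{\text{a rate-1 Poisson process has at most }\ell-1\text{ arrivals by time }t\}$. Setting $t=\varepsilon\ell$ and factoring, $\sum_{k=0}^{\ell-1} (\varepsilon\ell)^k/k! = \varepsilon^{\ell-1}\sum_{k=0}^{\ell-1} \varepsilon^{k-\ell+1}\,\ell^k/k! \leq \varepsilon^{\ell-1}\sum_{k=0}^{\ell-1} \ell^k/k! \leq \varepsilon^{\ell-1} e^\ell$, using $\varepsilon^{k-\ell+1}\leq 1$ whenever $\varepsilon\geq 1$ and $k\leq \ell-1$. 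Multiplying by $e^{-\varepsilon\ell}$ gives $\varepsilon^{\ell-1} e^{\ell-\varepsilon\ell} = \tfrac{1}{\varepsilon} e^{-\ell(\varepsilon-1-\ln\varepsilon)}$, as claimed.

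Finally, for the secondary bound $\tfrac{1}{\varepsilon}e^{-\ell(\varepsilon-1-\ln\varepsilon)}\leq e^{1-\varepsilon}$, I take logs: it suffices to verify $\ell(\varepsilon-1-\ln\varepsilon)+\ln\varepsilon\geq\varepsilon-1$, which rearranges to $(\ell-1)(\varepsilon-1-\ln\varepsilon)\geq 0$ and holds for integer $\ell\geq 1$ since $\varepsilon-1\geq\ln\varepsilon$ on $[1,\infty)$. The only step beyond routine bookkeeping is the truncated-Poisson-sum manipulation that buys the $1/\varepsilon$ saving; everything else is a one-line Chernoff optimization and an elementary inequality.
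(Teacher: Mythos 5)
Your proof is correct. The paper itself gives no proof of this lemma — it is stated with a citation to a reference on Erlang tail bounds — so there is nothing internal to compare against. Your argument is a clean, self-contained derivation, and you correctly identify the one nontrivial point: a bare Chernoff optimization on the MGF $(1-t)^{-\ell}$ at $t=1-1/\varepsilon$ loses the $1/\varepsilon$ prefactor on the upper tail, and recovering it requires switching to the exact survival function $\Pr[X\geq t]=e^{-t}\sum_{k=0}^{\ell-1}t^k/k!$ (valid since $\ell$ is an integer, which is the case in all of the paper's applications where $\ell\in\{k,2^hk\}$). The truncated-sum manipulation $\sum_{k<\ell}(\varepsilon\ell)^k/k!\le\varepsilon^{\ell-1}e^{\ell}$ is exactly the right move, the lower-tail Chernoff calculation is routine and correct, and the reduction of the secondary bound to $(\ell-1)(\varepsilon-1-\ln\varepsilon)\geq 0$ is also correct.
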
 
When $\varepsilon < 0.159$ or $\varepsilon > 3.2$ we have the bound
$e^{-\ell}$.  For $\varepsilon> 3.2$  we also have
$\frac{1}{\varepsilon} e^{-\ell(\varepsilon-2.2)}$

\begin{proof} [Proof of Theorem~\ref{Ruppertail:thm}]

We bound the probability of 
a ``bad event"  which we define as the numerator being ``too
large'' and  denominators being too ``small.''
More formally,
the numerator is the sum $N=\sum_{i=1}^k Z_i$ and we
define a bad event as   $N \geq 3.2 k$.
  Substituting $\varepsilon = 3.2$ and 
  $\ell=k$  in the
  upper tail bounds from  Lemma~\ref{Erlangtails:lemma}, we
  have that the probability of this bad event is bounded by
\begin{equation}  \label{badnum:eq}
 \Pr_{r\sim \Erlang[k,1]}\left[ r > k\varepsilon  \right] \leq 
 e^{-k}\enspace .
\end{equation} 

The denominators are prefix sums of of the sequence of random
variables.
We consider a partition the sequence $Z_{k+1},\ldots, Z_n$ to consecutive 
stretches of size 
\[ \ell_h := 2^h k ,   (h\geq 1) \ . \]
We denote by $S_h$ the sum of stretch $h$. 
Note that $S_h\sim \Erlang[\ell_h,1]$ are independent random variables.
We define a bad event as the probability that for some $h\geq 1$, 
$S_h \leq 0.15 \ell_h = 0.14\ 2^h k$.
 From the lower tail bound of  Lemma~\ref{Erlangtails:lemma}, we
  have
\begin{equation}\label{baddenom:eq}
  \Pr[S_h\leq 0.15\ell_h] = \Pr_{r\sim \Erlang[\ell_h,1]}\left[ r <  0.15 \ell_h  \right]  \leq
 e^{-\ell_h} \leq  e^{-2^h k}\enspace .
\end{equation}  
The combined probability of the union of these bad events (for the
numerator and all stretches) is at most
$e^{-k} + \sum_{h\geq 1} e^{-2^h k} \leq 3e^{-k}$.

We are now ready to compute probabilistic upper bound on the ratios
when there is no bad event
\begin{eqnarray*}
  R_{n,k,\rho} &\leq& \sum_{h\geq 1} \ell_h \frac{N^\rho }{(N+ \sum_{i<h}
                      S_i)^\rho } \\
  &\leq& \sum_{h\geq 1} \ell_h \frac{\left(3.2 k\right)^\rho}{\left(3.2 k + 0.15 \sum_{i<h} \ell_i  \right)^\rho}\\
 &=&  2k \frac{ \left(3.2 k \right)^\rho }{\left( 3.2 k \right)^\rho
     } +   \sum_{h\geq 2} 2^h k \frac{\left(3.2 k \right)^\rho}{\left(3.2 k + (2^{h}-2) k \right)^\rho}\\
  &\leq&  k (2+ \sum_{h= 2}^{\lceil \log_2(n/k)\rceil} 2^h
         \left(\frac{3.2}{2^h+1.2}\right)^\rho \leq k \left(2+  3.2^\rho \sum_{h= 2}^{\lceil \log_2(n/k)\rceil}  2^{-h(\rho-1)}\right)
  \end{eqnarray*}
For $\rho=1$ we have $O(k \log n)$.  For
$\rho>1$, we have $O(k/(\rho-1))$.   
\end{proof}   

From the proof of Theorem~\ref{Ruppertail:thm} we obtain:
\begin{corollary} \label{constB:coro}
There is a constant $B$ such that
when there are no ``bad events" in the sense of the proof of Theorem~\ref{Ruppertail:thm}, 
\[
\frac{\sum_{i=1}^{k} Z_i}{\sum_{i=1}^{B k} Z_i} \leq 1/3\ .
\]
\end{corollary}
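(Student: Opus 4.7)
The plan is to unpack what ``no bad events'' means in the proof of Theorem~\ref{Ruppertail:thm} and then show that with the same geometric partition, a constant $B$ chosen to cover the first several stretches already forces the denominator to dominate the numerator by a factor of $3$. Concretely, ``no bad events'' means (a) the numerator $N := \sum_{i=1}^{k} Z_i$ satisfies $N \leq 3.2\,k$, and (b) for every $h \geq 1$, the stretch sum $S_h := \sum_{i} Z_i$ over stretch $h$ (of length $\ell_h = 2^h k$ within $Z_{k+1},Z_{k+2},\ldots$) satisfies $S_h \geq 0.15\,\ell_h = 0.15\cdot 2^h k$.

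The first step is to fix an integer $H \geq 1$ and set $B := 2^{H+1} - 1$, so that $B k$ is exactly the index where the $H$-th stretch ends: $Bk = k + \sum_{h=1}^{H} \ell_h = k + k(2^{H+1} - 2)$. The second step is to apply (b) to lower bound the denominator as
\[
\sum_{i=1}^{Bk} Z_i \;\geq\; N + \sum_{h=1}^{H} S_h \;\geq\; N + 0.15\,k\,(2^{H+1} - 2).
\]
The desired inequality $\sum_{i=1}^{k} Z_i / \sum_{i=1}^{Bk} Z_i \leq 1/3$ is equivalent to $\sum_{i=1}^{Bk} Z_i \geq 3N$, i.e.\ $0.15\,k\,(2^{H+1}-2) \geq 2N$. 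Invoking (a), this is implied by $0.15\,(2^{H+1}-2) \geq 6.4$. Taking $H = 5$ gives $0.15\cdot 62 = 9.3 \geq 6.4$, so $B = 2^{6} - 1 = 63$ works.

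There is no real obstacle here: the entire content of the corollary is a bookkeeping consequence of the quantitative estimates already extracted in the proof of Theorem~\ref{Ruppertail:thm}. The only thing to be slightly careful about is that the partition into stretches starts at index $k+1$ (not at $1$), so the denominator is split as ``numerator part $N$'' plus ``stretch sums,'' and $B$ must be of the form $2^{H+1}-1$ so that an integer number of complete stretches is included without overshoot.
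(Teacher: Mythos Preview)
Your proof is correct and essentially identical to the paper's: both unpack the ``no bad events'' conditions $N \leq 3.2k$ and $S_h \geq 0.15\cdot 2^h k$, sum the first several stretches to lower-bound the denominator, and solve $0.15(2^{H+1}-2) \geq 6.4$ (the paper writes this slightly more tersely as $0.15\,kB \geq 6.4k$ with $B = 2^h - 1$) to arrive at the same constant $B = 63$. Your bookkeeping on where the stretches begin and end is in fact more careful than the paper's.
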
   
\begin{proof}
 With no bad events, $N = \sum_{i=1}^{k} Z_i < 3.2k$ and
 $\sum_{i=k+1}^{k 2^h } Z_i \geq 0.15 k (2^h-1)$.
 Solving for $0.15 k B \geq 6.4 k$ (for $B =2^h -1$ for some $h$) we obtain $B=63$.
 \end{proof}

 
 \section{Ratio of magnitudes of transformed weights} \label{keyration:sec}

  For $k_2>k_1$ we consider the distribution of the ratio between the $k_2^{th}$ and $k_1^{th}$ transformed weights:
   \[
   G_{\boldsymbol{w},p,q,k_1,k_2} \mid \pi := 
   \left| \frac{w^{*p}_{(k_2)}}{w^{*p}_{(k_1)}}\right|
   \]
  
  \begin{lemma} \label{domB:lemma}
  For any $\boldsymbol{w}\in \Re^n$, $\pi\in S^n$, and $k_1< k_2 \leq n$, the distribution  $G_{\boldsymbol{w},p,q,k_1,k_2} \mid \pi$ is dominated by
  \begin{equation}
       G'_{\rho=q/p,k_1,k_2} := \left(\frac{\sum_{i=1}^{k_1} Z_i}{\sum_{i=1}^{k_2} Z_i}\right)^\rho\enspace ,
    \end{equation}
    where $Z_i\sim\Exp[1]$ are i.i.d.
  \end{lemma}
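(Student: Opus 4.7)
My plan is to reproduce essentially the same argument used for Lemma~\ref{Rdomination:lemma}, with the only change being that we track the $k_1$-th and $k_2$-th order statistics of $\boldsymbol{w^*}$ rather than the $k$-th order statistic and the full tail. Without loss of generality assume $\order(\boldsymbol{w}) = \pi$. Then by the successive-minima / memorylessness calculation in the proof of Lemma~\ref{Rdomination:lemma}, the joint law of the ordered $p$-th powers of $\boldsymbol{w^*}$ under the conditioning admits the representation
\begin{equation*}
(w^*_{(i)})^p \;=\; \frac{1}{\sum_{j=1}^{i} X_j}, \qquad X_j \sim \Exp[W_j] \text{ independent}, \qquad W_j := \sum_{l=j}^{n} w_l^p.
\end{equation*}
Using $(w^*_{(k)})^q = \bigl((w^*_{(k)})^p\bigr)^{\rho}$ with $\rho = q/p$, the quantity in the lemma becomes
\begin{equation*}
G_{\boldsymbol{w},p,q,k_1,k_2}\mid \pi \;=\; \left(\frac{\sum_{j=1}^{k_1} X_j}{\sum_{j=1}^{k_2} X_j}\right)^{\!\rho}.
\end{equation*}

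Since $\rho > 0$, it is enough to show that the inner un-powered ratio is pointwise dominated by $\sum_{j=1}^{k_1} Z_j \big/ \sum_{j=1}^{k_2} Z_j$ for i.i.d.\ $Z_j \sim \Exp[1]$, after which raising to the $\rho$-th power preserves the inequality. For this I reuse the pivot trick from Lemma~\ref{Rdomination:lemma}. Write $Z_j := W_j X_j$, which are i.i.d.\ $\Exp[1]$, choose the pivot $W' := W_{k_1}$, and set $X'_j := Z_j / W'$. Because $W_j$ is non-increasing in $j$, the pivot sits exactly at the boundary between the two index blocks and gives $X_j \leq X'_j$ for $j \leq k_1$ and $X_j \geq X'_j$ for $j > k_1$. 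Rewriting
\begin{equation*}
\frac{\sum_{j=1}^{k_1} X_j}{\sum_{j=1}^{k_2} X_j} \;=\; \frac{1}{1 + \bigl(\sum_{j=k_1+1}^{k_2} X_j\bigr)\big/\bigl(\sum_{j=1}^{k_1} X_j\bigr)},
\end{equation*}
both inequalities on the $X_j$'s push the outer ratio upward when we swap in the $X'_j$'s. The common factor $1/W'$ then cancels in the swapped ratio to yield $\sum_{j=1}^{k_1} Z_j \big/ \sum_{j=1}^{k_2} Z_j$.

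The resulting inequality is a pointwise (coupling-wise) bound, so after raising to the $\rho$-th power it implies the claimed stochastic domination. The only point requiring care is that the pivot must lie on the boundary between the two blocks so that monotonicity of $W_j$ flips signs in the correct direction on each side; this is structurally the same mechanism used in Lemma~\ref{Rdomination:lemma}, and I do not anticipate a deeper obstacle.
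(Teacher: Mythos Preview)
Your proposal is correct and follows essentially the same argument as the paper's own proof: expressing $G$ via the partial sums $\sum_{j\le i} X_j$ with $X_j\sim\Exp[W_j]$, pivoting at $W'=W_{k_1}$ to obtain $X_j\le X'_j$ for $j\le k_1$ and $X_j\ge X'_j$ for $j>k_1$, and using the $\tfrac{1}{1+\cdot}$ rewriting to conclude the pointwise (coupling) inequality before raising to the $\rho$-th power. The paper's version even contains a sign typo in the ``$X'_i\ge X_i$ for $i\ge k_1$'' line that you have stated correctly.
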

  \begin{proof}
   Following the notation in the proof of Lemma~\ref{Rdomination:lemma},
   the distribution $G_{\boldsymbol{w},p,q,k_1,k_2}$ can be expressed as
   \[
   \left(\frac{\sum_{i=1}^{k_1} X_i}{\sum_{i=1}^{k_2} X_i}\right)^\rho
   \]
   where $X_i := \frac{Z_i}{\sum_{j=i}^n w_j^p}$.
   
   For $i\in [n]$ we define $X'_i := \frac{Z_i}{\sum_{j=k_1}^n w_j^p}$.
   Now note that $X'_i \geq X_i$ for $i\leq k_1$ and $X'_i \geq X_i$ for $i\geq k_1$.
   Therefore,
   \begin{align*}
      \frac{\sum_{i=1}^{k_1} X_i}{\sum_{i=1}^{k_2} X_i} &= \frac{1}{1+\frac{\sum_{i=k_1+1}^{k_2} X_i}{\sum_{i=1}^{k_1} X_i}} \\
      &\leq \frac{1}{1+\frac{\sum_{i=k_1+1}^{k_2} X'_i}{\sum_{i=1}^{k_1} X'_i}} =
   \frac{1}{1+\frac{\sum_{i=k_1+1}^{k_2} Z_i}{\sum_{i=1}^{k_1} Z_i}} = 
   \frac{\sum_{i=1}^{k_1} Z_i}{\sum_{i=1}^{k_2} Z_i}\enspace .
   \end{align*}
  \end{proof}

\section{1-pass with total variation distance on sample $k$-tuple: upper and lower bounds}\label{onepassTD:sec}

Perfect ppswor returns each subset of $k$ keys $S=\{i_1,\ldots,i_k\}$ with a certain probability:
\[
p(S) = \sum_{\pi \mid \{\pi_1,\ldots,\pi_k\}=S} \prod_{j=1}^k \frac{w_{i_j}}{\|\boldsymbol{w}\|_1 -\sum_{h<j}w_{i_h}}\ .
\]
Recall that the distribution is equivalent to successive weighted sampling without replacement.  It is also equivalent to successive sampling with replacement if we ``skip" repetitions until we obtain $k$ distinct keys.

With $p$-ppswor and unaggregated data, this is with respect to $\nu_x^p$.  The WORp 1-pass method returns an approximate $p$-ppswor sample in terms of estimation quality and per-key inclusion probability  but the TV distance on $k$-tuples can be large.

We present here another 1-pass method that returns a $k$-tuple with a polynomially small VT distance from $p$-ppswor.

\begin{algorithm2e}\caption{1-pass Low Variation Distance Sampling}\label{lowTVD:alg}
 \KwIn{$\ell_p$ rHH method, perfect $\ell_p$-single sampler method, sample size $k$, $p$, $\delta$, $n$,}
 \DontPrintSemicolon
 {\bf Initialization:}\;
\hspace{5mm} Initialize $r = C \cdot k \log n$ independent perfect $\ell_p$-single sampling algorithms $A^1, \ldots, A^r$.\; 
\hspace{5mm} Initialize an $\ell_p$ rHH method $R$.\;
 {\bf Pass 1:}\;
\hspace{5mm} Feed each stream update into $A^1, \ldots, A^r$ as well as into $R$.\;
  {\bf Produce sample:}\;
\hspace{5mm} $S \leftarrow \emptyset$\;
\hspace{5mm} For $i = 1, \ldots, r$\;
\hspace{7mm} Let $Out_i$ be the index returned by $A^i$\;
\hspace{7mm} If $Out_i \notin S$, then\;
\hspace{9mm} $S \leftarrow S \cup \{Out_i\}$\;
\hspace{9mm} For each $j > i$, feed the update $x_{Out_i} \leftarrow x_{Out_i} - R(Out_i)$ into $A^j$ \tcp*{$R(Out_i)$ is the estimate of $x_i$ given by $R$}\;
\hspace{7mm} If $|S| = k$ then exit and return $S$\;
\hspace{5mm} end\;
Output {\sc Fail}\tcp*{Algorithms returns $S$ before reaching this line with high probability}
 \end{algorithm2e}

\begin{theorem}\label{thm:tvd}
Let $p \in (0,2]$. There is a $1$-pass turnstile streaming algorithm using $k \cdot \textrm{poly}(\log n)$ bits of memory which, given a stream of updates to an underlying vector $x \in \{-M, -M+1, \ldots, M-1, M\}^n$, with $M = \textrm{poly}(n)$, outputs a set $S$ of size $k$ such that the distribution of $S$ has variation distance at most $\frac{1}{n^C}$ from the distribution of a sample without replacement of size $k$ from the distribution $\mu = (\mu_1, \ldots, \mu_n)$, where $\mu_i = \frac{|x_i|^p}{\|x\|_p^p}$, where $C > 0$ is an arbitrarily large constant.  
\end{theorem}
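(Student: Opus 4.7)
My plan is to analyze Algorithm~\ref{lowTVD:alg} by coupling it with an idealized WOR sampling process in which each sampler $A^i$ has zero variation distance from an exact $\ell_p$-sample and every subtraction uses the true $x_{Out_i}$ rather than the estimate $R(Out_i)$. In the ideal process the $j$-th accepted index is drawn from the $\ell_p^p$-renormalized distribution on the yet-unpicked coordinates, so the output $k$-tuple is a perfect WOR $\ell_p$-sample; sharing all randomness between the two processes gives a coupling, and the goal is to bound the output TV distance by $1/n^C$.

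I would split the proof into three ingredients. \textbf{(A) Per-step perturbation.} A short calculation on the $\ell_p$-sampling distribution shows that, conditional on the accepted history $S_{i-1}$ being the same in both runs, the conditional TV between $A^i$'s outputs in the real and ideal processes is at most $\|\delta^{(i)}\|_p^p / \|x'\|_p^p + n^{-C_1}$, where $\delta^{(i)}$ is the accumulated subtraction error (supported on $S_{i-1}$), $\|x'\|_p^p = \|x\|_p^p - \sum_{\ell\in S_{i-1}}|x_\ell|^p$ is the ideal residual mass, and $n^{-C_1}$ is the per-sampler TV guarantee of [JayaramW:Focs2018]. \textbf{(B) Good event.} Since a perfect $\ell_p$-sampler returns $\ell$ with probability $|x_\ell|^p/\|x\|_p^p$, the probability that any individual sampler outputs $\ell$ with $|x_\ell|^p < \|x\|_p^p/n^{C_2}$ is at most $n^{-C_2+1}$; union-bounding over the $r$ samplers, with probability $\ge 1 - r\,n^{-C_2+1}$ every accepted $Out_j$ satisfies $|x_{Out_j}|^p \ge \|x\|_p^p/n^{C_2}$. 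Conditioned on this good event, a suitably tuned rHH layer (see below) makes every subtraction essentially exact, driving $\|\delta^{(i)}\|_p^p$ to $0$ on the good event. \textbf{(C) Coupon collection.} Once subtractions are accurate each sampler contributes a fresh accepted index with probability $\ge 1/2$, so by a Chernoff bound $r = Ck\log n$ independent samplers yield $k$ distinct indices except with probability $n^{-C}$. Combining (A)--(C) via a union bound over the $r$ sampler TVs, the $k$ subtraction perturbations, the good event, and the coupon-collection failure gives the claimed TV bound of $1/n^C$.

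The principal obstacle is (B): driving the subtraction error small enough while keeping sketch size $k\cdot\polylog(n)$. A generic $(k,\psi)$-rHH guarantee would demand $\psi = 1/\poly(n)$ and polynomial space. The resolution is to exploit the fact that the algorithm only ever subtracts coordinates returned by a perfect $\ell_p$-sampler -- on the good event these satisfy $|x_{Out_j}|^p \ge \|x\|_p^p/n^{C_2}$ -- together with the fact that stream entries are integers with polynomially bounded magnitudes. A $(k,1/\polylog(n))$-\texttt{CountSketch} boosted by $O(\log n)$ independent repetitions, combined with rounding to the nearest integer, identifies and recovers the exact value of every subtracted coordinate on the good event, keeping total sketch size $k\cdot\polylog(n)$ and yielding $\delta^{(i)} = 0$ on the good event, which is precisely what Part~(B) requires to close the TV bound.
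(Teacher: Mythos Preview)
Your decomposition into (A) a per-step TV bound, (B) exact subtraction on a good event, and (C) coupon collection is structurally reasonable, but step~(B) contains a genuine gap that breaks the argument. You claim that a $(k,1/\polylog(n))$-\texttt{CountSketch} together with integer rounding recovers $x_{Out_j}$ exactly on the good event. But the \texttt{CountSketch} guarantee is an \emph{additive} bound $|R(j)-x_j|\le\sqrt{\psi/k}\,\|\tail_k(x)\|_2$ that depends only on the tail norm of the entire vector, not on the magnitude of the queried coordinate. Since $\|\tail_k(x)\|_2$ can be as large as $\sqrt{n}\,M=\poly(n)$, driving the additive error below $1/2$ (which is what rounding requires) forces $\psi=1/\poly(n)$ and hence polynomial sketch size. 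Your good event certifies that $|x_{Out_j}|$ is large, which yields small \emph{relative} error; rounding to the nearest integer needs small \emph{absolute} error, and the good event gives no leverage there.

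The paper avoids this entirely by never attempting to make $\delta^{(i)}=0$. The observation you are missing is that on the unsampled coordinates $[n]\setminus S$ the perturbed vector agrees with $x$ exactly, so \emph{conditioned on $Out_i\notin S$} the output of $A^i$ already has variation distance $O(n^{-C_1})$ from $\mu$ restricted to $[n]\setminus S$ --- regardless of what residuals sit on $S$. Thus the per-accepted-step contribution to the output TV is $O(n^{-C_1})$ with no $\|\delta^{(i)}\|_p^p/\|x'\|_p^p$ term. The rHH sketch with only $\psi=\Theta(1)$ is used to bound the residual $p$-mass on $S$ by $\frac{|S|}{2k}\|\tail_k(x)\|_p^p\le\frac12\|\tail_k(x)\|_p^p$, which caps the rejection probability at $1/2$ and is exactly what is needed for the coupon-collector step~(C). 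In short, the term $\|\delta^{(i)}\|_p^p/\|x'\|_p^p$ in your (A) governs only the rejection probability and need not be polynomially small; once you condition on acceptance it disappears from the TV accounting, and that is what makes a polylog-size rHH sketch sufficient.
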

\begin{proof}
The algorithm is $1$-pass and works in a turnstile stream given an $\ell_p$ rHH method and perfect $\ell_p$-single sampler methods that have this property. We use the $\ell_p$ rHH method of \cite{Jowhari:Saglam:Tardos:11}, which has this property and uses $O(k \cdot \log^2 n)$ bits of memory. We also use the perfect $\ell_p$-single sampler method of \cite{JayaramW:Focs2018}, which has this property and uses $\log^2 n \cdot \textrm{poly}(\log \log n)$ bits of memory for $0 < p < 2$ and $O(\log^3 n)$ bits of memory for $p = 2$. The perfect $\ell_p$-single sampler method of \cite{JayaramW:Focs2018} can output {\sc Fail} with constant probability, but we can repeat it $C \log n$ times and output the first sample found, so that it outputs {\sc Fail} with probability at most $\frac{1}{n^C}$ for an arbitrarily large constant $C > 0$, and consequently we can assume it never outputs {\sc Fail} (by say, always outputting index $1$ when {\sc Fail} occurs). This gives us the claimed $k \cdot \textrm{poly}(\log n)$ total bits of memory. 

We next state properties of these subroutines. The $\ell_p$ rHH method we use satisfies: with probability $1-\frac{1}{n^C}$ for an arbitrarily large constant $C > 0$, simultaneously for all $j \in [n]$, it outputs an estimate $R(j)$ for which 
$$R(j) = x_i \pm \left (\frac{1}{2k} \right ) ^{1/p} \cdot \|\tail_k(x)\|_p.$$
We assume this event occurs and add $\frac{1}{n^C}$ to our overall variation distance. 

The next property concerns the perfect $\ell_p$-single samplers $A^j$ we use. Each $A^j$ returns an index $i \in \{1, 2, \ldots, n\}$ such that the distribution of $i$ has variation distance at most $\frac{1}{n^C}$ from the distribution $\mu$. Here $C > 0$ is an arbitrarily large constant of our choice.

We next analyze our procedure for producing a sample. Consider the joint distribution of $(Out_1, Out_2, \ldots, Out_{2Ck \log n})$. The algorithms $A^i$ use independent randomness. However, the input to $A^i$ may depend on the randomness of $A^{i'}$ for $i' < i$. However, by definition, conditioned on $A^i$ not outputting $Out_{i'}$ for any $i' < i,$ we have that $Out_i$ is independent of $Out_{1}, \ldots, Out_{i-1}$ and moreover, the distribution of $Out_i$ has variation distance $\frac{1}{n^C}$ from the distribution of a sample $s$ from $\mu$ conditioned on $s \notin \{Out_1, \ldots, Out_{i-1}\}$, for an arbitrarily large constant $C> 0$.  

Let $E$ be the event that we sample $k$ distinct indices, i.e., do not output {\sc Fail} in our overall algorithm. We show below that $\Pr[E] \geq 1-\frac{1}{n^C}$ for an arbitrarily large constant $C> 0$. Consequently, our output distribution has variation distance $\textrm{1}{n^C}$ from an idealized algorithm that samples until it has $k$ distinct values. 

Consider the probability of outputting a particular ordered tuple $(i_1, \ldots, i_k)$ of $k$ distinct indices in the idealized algorithm that samples until it has $k$ distinct values. By the above, this is 
$$\prod_{j=1}^k (1 \pm \frac{1}{n^C}) \frac{\mu_{i_j}}{1-\sum_{j' < j} \mu_{i_{j'}}} = (1 \pm \frac{2k}{n^C}) \prod_{j=1}^k \frac{\mu_{i_j}}{1-\sum_{j' < j} \mu_{i_{j'}}},$$
for an arbitrarily large constant $C > 0$. 
Summing up over all orderings, we obtain the probability of obtaining
the sample $\{i_1, \ldots, i_k\}$ is within $(1 \pm \frac{1}{n^C})$ times
its probability of being sampled from $\mu$ in a sample without
replacement of size $k$, where $C > 0$ is a sufficiently large constant.

It remains to show $\Pr[E] \leq n^{-C}$ for an arbitrarily large
constant $C > 0$. Here we use that for all $j \in \{1, 2, \ldots, n\},$
$R(j) = x_i \pm \left (\frac{1}{2k} \right ) ^{1/p} \cdot \|\tail_k(x)\|_p.$
Let $Y_i$ be the number of trials until (and including the time) we sample the $i$-th distinct item, given
that we have just sampled $i-1$ distinct items. The total probability mass on the items we have already sampled is at most $i \cdot \frac{1}{2k} \|\tail_k(x)\|_p^p$, and thus the probability we re-sample an item already sampled is at most $\frac{1}{2}$. It follows that ${\bf E}[Y_i] \leq 2$. Thus, the number of trials in the algorithm is stochastically dominated by $\sum_{i=1}^k Z_i$, where $Z_i$ is a geometric random variable with ${\bf E}[Z_i] = 2$. This sum is a negative binomial random variable, and by standard tail bounds relating a sum of independent geometric random variables to binomial random variables\cite{mr99} \footnote{See, also, e.g., \url{https://math.stackexchange.com/questions/1565559/tail-bound-for-sum-of-geometric-random-variables}}, is at most $Ck \log n$ with probability $1-\frac{1}{n^C}$ for an arbitrarily large constant $C > 0$. 

This completes the proof. 
\end{proof}

We now analyze the memory in Theorem \ref{thm:tvd} more precisely. Algorithm \ref{lowTVD:alg} runs $r = O(k \log n)$ independent perfect $\ell_p$-sampling algorithms of \cite{JayaramW:Focs2018}. The choice of $r = O(k \log n)$ is to ensure that the variation distance is at most $\frac{1}{\poly(n)}$; however, with only $r = O(k)$ such samplers, the same argument as in the proof of Theorem \ref{thm:tvd} gives variation distance at most $2^{-\Theta(k)}$. Now, each $\ell_p$-sampler of \cite{JayaramW:Focs2018} uses $O(\log^2 n (\log \log n)^2)$ bits of memory for $0 < p < 2$, and uses $O(\log^3 n)$ bits of memory for $p = 2$. We also do not need to repeat the algorithm $O(\log n)$ times to create a high probability of not outputting {\sc Fail}; indeed, already if with only constant probability the algorithm does not output {\sc Fail}, we will still obtain $k$ distinct samples with $2^{-\Theta(k)}$ failure probability provided we have a large enough $r = O(k)$ number of such samplers. 

Algorithm \ref{lowTVD:alg} also runs an $\ell_p$ rHH method, and this uses $O(k \log^2 n)$ bits of memory \cite{Jowhari:Saglam:Tardos:11}. Consequently, to acheive variation distance at most $2^{-\Theta(k)}$, Algorithm \ref{lowTVD:alg} uses $O(k \log^2 n (\log \log n)^2)$ bits of memory for $0 < p < 2$, and $O(k \log^3 n)$ bits of memory for $p = 2$.

We now show that for $0 < p < 2$, the memory used of Algorithm \ref{lowTVD:alg} is best possible for any algorithm, up to a multiplicative $O((\log \log n)^2)$ factor. For $p = 2$, we show our algorithm's memory is optimal up to a multiplicative $O(\log n)$ factor. Further, our lower bound holds even for any algorithm with the much weaker requirement of achieving variation distance at most $\frac{1}{3}$, as opposed to the variation distance at most $2^{-\Theta(k)}$ that we achieve. 

\begin{theorem}
Any $1$-pass turnstile streaming algorithm which outputs a set $S$ of size $k$ such that the distribution of $S$ has variation distance at most $\frac{1}{3}$ from the distribution of a sample without replacement of size $k$ from the distribution $\mu = (\mu_1, \ldots, \mu_n)$, where $\mu_i = \frac{|x_i|^p}{\|x\|_p^p}$, requires $\Omega(k \log^2 n)$ bits of memory, provided $k < n^{C_0}$ for a certain absolute constant $C_0 > 0$. 
\end{theorem}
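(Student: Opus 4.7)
The plan is to prove the lower bound by a direct-sum reduction from the known $\Omega(\log^2 m)$ lower bound for turnstile single-element $\ell_p$ sampling (Kane--Nelson--Woodruff / Jowhari--Saglam--Tardos). Set $m = \lfloor n/k \rfloor$ and partition $[n]$ into $k$ disjoint blocks $B_1,\dots,B_k$ of size $m$. Sample $k$ independent hard instances $y^{(1)},\dots,y^{(k)} \in \{-M,\dots,M\}^m$ from the hard distribution witnessing the $\Omega(\log^2 m)$ single-sample bound, each scaled so that $\|y^{(j)}\|_p^p = 1$, and let $x \in \{-M,\dots,M\}^n$ be their concatenation across the blocks. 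Then $\|x\|_p^p = k$ and the blocks have equal mass $1/k$ under the induced distribution $\mu$.

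The key structural observation is that a size-$k$ WOR $\ell_p$ sample of $x$ decomposes as follows: the multiset of \emph{blocks} that receive samples is distributed as a size-$k$ WOR sample from the uniform distribution on $[k]$, and, conditioned on the block allocation, the indices drawn inside each block $B_j$ form an independent WOR $\ell_p$ sample of $y^{(j)}$. In particular, the first (in some fixed tie-break order) sample returned inside $B_j$, conditioned on $B_j$ being hit, is distributed exactly as a single $\ell_p$ sample of $y^{(j)}$. A standard balls-and-bins computation shows that the expected number of blocks hit is $k(1-(1-1/k)^k) \ge (1-1/e)k$, and concentration shows this holds with probability $1-o(1)$.

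Now let $\mathcal{A}$ be a one-pass streaming algorithm achieving variation distance at most $1/3$ from a true size-$k$ WOR $\ell_p$ sample, using $S$ bits of memory, and let $M$ denote its memory state after processing the stream. For each block $j \in [k]$, define a single-sample decoder $D_j$ that outputs the first index of $\mathcal{A}$'s output lying in $B_j$ if any, and an arbitrary default otherwise. By the WOR structure above and the $1/3$ variation-distance assumption, $D_j(M)$ is within total variation $1/3 + o(1)$ from the mixture ``with probability $p_j \ge 1-1/e-o(1)$, output a true single $\ell_p$ sample of $y^{(j)}$; otherwise output the default.'' An averaging argument over $j$ then produces an index set $J \subseteq [k]$ of size $\Omega(k)$ such that, conditioned on $j \in J$, the decoder $D_j$ yields a distribution within constant TV distance (say $1/10$) of the true single-sample distribution on $y^{(j)}$.

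Invoking the single-sample lower bound for each $j \in J$, we get $I(y^{(j)}\,;\,M) = \Omega(\log^2 m)$, since $D_j$ is a post-processing of $M$ that solves the hard single-sample problem on $y^{(j)}$. Because the $y^{(j)}$ are independent, subadditivity of information under conditional independence gives
\[
H(M) \;\ge\; I(y^{(1)},\dots,y^{(k)}\,;\,M) \;\ge\; \sum_{j\in J} I(y^{(j)}\,;\,M) \;=\; \Omega(k\log^2 m).
\]
The assumption $k < n^{C_0}$ for small $C_0$ ensures $\log m = \Omega(\log n)$, so $S \ge H(M) = \Omega(k \log^2 n)$, as claimed. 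I expect the main technical obstacle to be the \emph{averaging step}, that is, turning the global TV-distance bound of $1/3$ on the $k$-tuple into a per-block TV bound small enough (and for $\Omega(k)$ blocks) to legitimately invoke the single-sample lower bound; this requires carefully coupling the idealized WOR distribution with $\mathcal{A}$'s output and pigeonholing over blocks. A secondary subtlety is to ensure that the hard single-sample instances can be concatenated and still fit the promised magnitude and integrality assumptions ($M = \poly(n)$, polynomially bounded intermediate frequencies), which is immediate from the standard Augmented-Indexing-based hard instance.
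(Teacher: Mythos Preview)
Your direct-sum approach is a natural idea, but it diverges from the paper's proof and, as written, has a real structural gap.

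\textbf{The paper's route.} The paper does \emph{not} reduce from $k$ independent single-sample instances. Instead it observes that any index $i$ with $|x_i|^p \ge \|x\|_p^p/k$ lands in a true WOR sample with probability at least $1-(1-1/k)^k$, hence in your algorithm's output with probability at least $0.29$ after subtracting the $1/3$ TV slack. Repeating the sampler $O(1)$ times and running a small \texttt{CountSketch} side structure of only $O((k\log k)\log n)$ bits (needing only $O(\log k)$ rows because one union-bounds over the $O(k)$ returned candidates, not all of $[n]$) lets one solve the heavy-hitters task: output every index with $|x_i|^p \ge \|x\|_p^p/k$ and none with $|x_i|^p < \|x\|_p^p/(2k)$. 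The $\Omega(k\log^2 n)$ lower bound of Jowhari--Sa\u{g}lam--Tardos for that heavy-hitters task (which already goes through Augmented Indexing and only needs one particular heavy hitter to be reported with constant probability) then gives the result after subtracting the $O((k\log k)\log n)$ side cost, which is where the restriction $k<n^{C_0}$ enters.

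\textbf{The gap in your argument.} Your structural decomposition of the WOR sample is incorrect. A size-$k$ WOR sample from the uniform distribution on $[k]$ is simply all of $[k]$, so your first claim would force exactly one sample per block, which is false. More importantly, the output in the theorem statement is an \emph{unordered set} $S$, so ``the first (in some fixed tie-break order) sample returned inside $B_j$'' is not the ppswor-minimum of $B_j$; a fixed tie-break order is unrelated to the exponential keys, and the resulting index is \emph{not} distributed as a single $\ell_p$ sample from $y^{(j)}$. What is true in the bottom-$k$ picture is that the within-block argmin is a correct single sample, is independent across blocks, and belongs to $S$ whenever $S\cap B_j\neq\emptyset$; but you cannot identify it from $S$ when $|S\cap B_j|>1$, and conditioning on $|S\cap B_j|=1$ while also controlling its probability uniformly over blocks and coupling it to the $1/3$ global TV bound is exactly the delicate step you would need to carry out. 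A second, smaller gap: your information step ``$D_j$ succeeds $\Rightarrow I(y^{(j)};M)=\Omega(\log^2 m)$'' presumes the single-sample lower bound is available in an information-complexity form for your hard distribution; the JST bound is via Augmented Indexing, so this is plausible, but it is not the black-box space statement you cite and has to be invoked in its information form. The paper's reduction to heavy hitters sidesteps all of this.
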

\begin{proof}
We use the fact that such a sample $S$ can be used to find a constant fraction of the $\ell_q (k,1)$ residual heavy hitters in a data stream. Here we do not need to achieve residual error for our lower bound, and can instead define such indices $i$ to be those that satisfy $|x_i|^p \geq \frac{1}{k} \|x\|_p^p$. Notice that there are at most $k$ such indices $i$, and any sample $S$ (with or without replacement) with variation distance at most $1/3$ from a true sample has probability at least $1-(1-1/k)^k -1/3 \geq 1-1/e-1/3 \geq .29$ of containing the index $i$. By repeating our algorithm $O(1)$ times, we obtain a superset of size $O(k)$ which contains any particular such index $i$ with arbitrarily large constant probability, and these $O(1)$ repetitions only increase our memory by a constant factor.

It is also well-known that there exists a point-query data structure, in particular the \texttt{CountSketch} data structure \cite{ccf:icalp2002,price11}, which only needs $O(\log |S|) = O(\log k)$ rows and thus $O((k \log k) \log n)$ bits of memory, such that given all the indices $j$ in a set $S$, one can return all items $j \in S$ for which $|x_j|^p \geq \frac{1}{k} \|x\|_p^p$ and no items $j \in S$ for which $|x_j|^p < \frac{1}{2k} \|x\|_p^p$. Here we avoid the need for $O(\log n)$ rows since we only need to union bound over correct estimates in the set $S$.

In short, the above procedure allows us to, with arbitrarily large constant probability, return a set $S$ containing a random $.99$ fraction of the indices $j$ for which $|x_j|^p \geq \frac{1}{k} \|x\|_p^p$, and containing no index $j$ for which $|x_j|^p < \frac{1}{2k} \|x\|_p^p$. 

We now use an existing $\Omega(k \log^2 n)$ bit lower bound, which is stated for finding all the heavy hitters \cite{Jowhari:Saglam:Tardos:11}, to show an $\Omega(k \log^2 n)$ bit lower bound for the above task. This is in fact immediate from the proof of Theorem 9 of \cite{Jowhari:Saglam:Tardos:11}, which is a reduction from the one-way communication complexity of the Augmented Indexing Problem and just requires any particular heavy hitter index to be output with constant probability. In particular, our algorithm, combined with the $O((k \log k) \log n)$ bits of memory side data structure of \cite{ccf:icalp2002} described above, achieves this.  

Consequently, the memory required of any $1$-pass streaming algorithm for the sampling problem is at least $\Omega(k \log^2 n) - O((k \log k) \log n)$ bits, which gives us an $\Omega(k \log^2 n)$ lower bound provided $k < n^{C_0}$ for an absolute constant $C_0 > 0$, completing the proof.  
\end{proof}

\section{Estimates of one-pass WORp} \label{onepassquality:sec}

We first review the setup.  Our one-pass WORp method returns the top $k$ keys by $\widehat{\nu^*_x}$ as our sample $S$ and returns
$\widehat{\nu^*}_{(k+1)}$ as the threshold.
The estimate of $f(\nu_x)$ is $0$ for $x\not\in S$ and for $x\in S$ is
\begin{equation} 
\widehat{f(\nu_x)} := \frac{ f(\widehat{\nu^*_x} r_x^{1/p})}{1- \exp\left(- r_x (\frac{\widehat{\nu^*_x}}{\widehat{\nu^*}_{(k'+1)}})^p\right)}\enspace .
\end{equation}

We assume that $f(w)$ is such that for some constant $c$, 
\begin{equation} \label{fprop:eq}
\forall \varepsilon<1/2, 
|f((1+\varepsilon)w)-f(w)| \leq  c \varepsilon f(w)\enspace .
\end{equation}

We need to establish that
\begin{align*}
    \Bias[\widehat{f(\nu_x)}] &\leq  O(\varepsilon) f(\nu_x)\\
   \MSE[\widehat{f(\nu_x)}] &\leq   (1+O(\varepsilon)) \Var[\widehat{f(\nu_x)}'] + O(\varepsilon) f(\nu_x)^2 \enspace , 
\end{align*}
  where 
$\widehat{f(\nu_x)}'$ are estimates obtained with a (perfect) $p$-ppswor sample.

  
\begin{proof} [Proof of Theorem~\ref{onepassquality:thm}]
From \eqref{usePsi:eq}, the rHH sketch has the property that for all keys in the dataset,
\begin{equation}\label{err1p:eq}
\| \widehat{\bnu^*} -\bnu^* \|_\infty \leq \varepsilon \nu^*_{(k+1)} \enspace .
\end{equation}
For sampled keys, $|\nu^*_x| \geq |\nu^*_{(k+1)}|$ and hence 
$|\widehat{\nu^*_x} - \nu^*_x| \leq \varepsilon |\nu^*_{(k+1)}| \leq \varepsilon |\nu^*_x|$.  Using \eqref{out2in:eq} we obtain that 
$\|\nu'_x - \nu_x\| \leq \varepsilon |\nu_x|$.

From our assumption \eqref{fprop:eq} , we have
$|f(\nu'_x)-f(\nu_x)|\leq c \varepsilon f(\nu_x)$.

We consider the inclusion probability and frequency estimate of a particular key $x$, conditioned on fixed randomization $r_z$ of all other keys $z\not= x$.
The key $x$ is included in the sample if $\widehat{\nu^*_x} \geq \widehat{\bnu^*}_{(k+1)}$.
We consider the distribution of $\widehat{\nu^*_x}$ as a function of $r_x\sim \Exp[1]$.
The value has a form of $E + \nu_x/r_x^{1/p}$, where the erro $E$ 
satisfies $|E| \leq \varepsilon |\nu^*_{(k+1)}|$.  
The conditioned inclusion probability thus falls in the range
\[
p'_x = \Pr[\nu_x/r_x^{1/p} \pm \varepsilon |\nu^*_{(k)}| \geq \widehat{\bnu^*}_{(k)}] = \Pr\left[ r_x \leq \left(\frac{\nu_x}{\widehat{\bnu^*}_{(k+1)} \pm \varepsilon |\nu^*_{(k)}|}\right)^p\right] = 1-\exp(- \left(\frac{\nu_x}{\widehat{\bnu^*}_{(k+1)} \pm \varepsilon |\nu^*_{(k)}|}\right)^p)\enspace .
\]
We estimate $p'_x$ by 
\begin{equation}\label{pestapprox:eq} 
p''_x = 1- \exp\left(- r_x (\frac{\widehat{\nu^*_x}}{\widehat{\nu^*}_{(k+1)}})^p\right) \enspace . 
\end{equation}
This estimate
has a small relative error. This due to the relative error in $\widehat{\nu^*_x}$ and
because  $|(1-\exp(-(1\pm \epsilon) b)))- (1-\exp(-b))| = O(\epsilon) (1-\exp(-b))$ and
$(\frac{\nu'_x}{\widehat{\bnu^*}_{(k)}})^p)$ is an $O(\epsilon)$ relative error approximation of
$(\frac{\nu_x}{\widehat{\bnu^*}_{(k)} -E})^p$.

We first consider the bias.  Instead of using the unbiased inverse probability estimate $f(\nu_x)/p'_x$ when $x$ is sampled (with probability $p'_x$) our estimator~\eqref{xppsworest:eq} $(f(\nu'_x)/p''_x$ approximates both the numerator and the denominator.  

In the numerator of the estimator,  we replace $f(\nu_x)$ by the relative error approximation  $f(\nu'_x)$. Therefore overall, we use a small relative error estimate of the actual inverse probability estimate when it is non zero, which translates to a bias that is $O(\epsilon) f(\nu_x)$.

We next bound the Mean Squared Error (MSE) of the estimator~\eqref{xppsworest:eq}.  We express the variance contribution of
exact $p$-ppswor conditioned on the same randomization $r_z$ of all keys $z\not=x$. 
This is  $\Var[\widehat{f(\nu_x)}'] = (1/p_x -1) f(\nu_x)^2$, where
$p_x = \Pr[\nu_x/r_x^{1/p} \geq \bnu^*_{(k)}] = 1-\exp(- \left(\frac{\nu_x}{\bnu^*_{(k)}}\right)^p)$.
The MSE contribution is
\begin{equation} \label{approxMSE:eq}
    p'_x(f(\nu'_x)/p''_x -f(\nu_x))^2 + (1-p'_x)f(\nu_x)^2 \enspace .
\end{equation}

We observe that the approximate threshold (that determines $p'_x$) approximates the perfect $p$-ppswor threshold:
$| \widehat{\bnu^*}_{(k)}-\bnu^*_{(k)} | \leq \varepsilon |\nu^*_{(k)}|$.
When $p_x<1/2$, \eqref{approxMSE:eq} approximates $\Var[\widehat{f(\nu_x)}']$ with relative error $O(\varepsilon)$. 

When $p_x$ is close to $1$ this is dominated by $O(\varepsilon) f(\nu_x)^2$.

\end{proof}

We remark that our analysis of the error only assumed the rHH error bound \eqref{err1p:eq} which holds for all sketch types including \texttt{Counters}. The bias analysis can be tightened for \texttt{CountSketch} that returns unbiased estimates of the frequency.

\section{Pseudocode}

 \begin{algorithm2e}\caption{2-pass WORp}\label{twopass:alg}
 \KwIn{$\ell_q$ rHH method, sample size $k$, $p$, $\delta$, $n$, }
 \DontPrintSemicolon
 {\bf Initialization:}\;
 Draw a random hash $r_x \sim \mathcal{D}$\tcp*{Random map of keys $x$ to $r_x$}\;
 $\psi \gets \frac{1}{3}\Psi_{n,k,q/p}(\delta)$\;
 Initialize $\keyhash$ \tcp*{random hash function from strings to $[n]$}\;
 $R.\Rinit(k,\psi)$ \tcp*{Initialize rHH structure randomization }
 {\bf Pass I:} \tcp*{Use composable aggregation (process input keys into rHH structures and merge rHH structures)}
 \Begin{
    {\bf Process data element} $e=(e.\ekey,e.\eval)$ into rHH sketch $R$\;
  $R.\Rprocess(\keyhash(e.\ekey),e.\eval/r_{e.\ekey}^{1/p})$\tcp*{Generate and process output element}
 }
 {\bf Pass II:}\tcp*{For keys with top $2k$ estimates $\widehat{\nu^*_x}$, collect exact frequencies $\nu_x$.}\;
 Initialize a composable top-$2k$ structure $T$. The structure stores for each key its priority and frequency. The structure collects exact frequencies for the keys with top $2k$ priorities. 
 $\Rmerge(T_1,T_2)$: Add up values and retain $3k$ top priority keys.\;  

 {\bf Process data element} $e=(e.\ekey,e.\eval)$ into $T$\;
\Begin{
    \eIf{$e.\ekey \in T$}{$T[e.\ekey].val += e.\eval$}
    { 
    $est \gets R.\Rest[\keyhash(e.\ekey)]$ \tcp*{ $\widehat{\nu^*_{\ekey}}$} \; 
    \If{$est > \text{lowest priority in $T$}$}{Insert $e.\ekey$ to $T$\;
    $T[e.\ekey].val \gets e.\eval$\;
    $T[e.\ekey].priority \gets est$\; 
    \If{|T|>2k}{Eject lowest priority key from $T$\;}
    }
    }
    }
  {\bf Produce sample:}
  Sort $T$ by $T[x].val * r^{1/p}_x$ \tcp*{actual $\nu^*_x$ for keys in $T$}\;
  Return $(x,T[x].val)$ for top-$k$ keys and $(k+1)$th $T[x].val * r^{1/p}_x$\;
 \end{algorithm2e}

\end{document}